 \newcommand{\ind}[1]{\ 1\hspace{-2.3mm}{1} _{\{#1\}}}
\newtheorem{theorem}{Theorem}
\newtheorem{lemma}{Lemma}[section]
\newtheorem{corollary}[theorem]{Corollary}
\newtheorem*{remark}{Remark}
\newtheorem{proposition}[theorem]{Proposition}
\newcommand{\reals}{\ensuremath{\mathbb{R}}}
\newcommand{\Z}{\ensuremath{\mathbb{Z}}}
\newcommand{\R}{\ensuremath{\mathbb{R}}}
\newcommand{\argmin}[1]{\underset{#1}{\mathrm{argmin}} \:}
\newcommand{\E}[1]{{\mathbb{E}\left[{#1}\right]}}
\newcommand{\hE}[1]{{\mathbb{\hat{E}}\left[{#1}\right]}}
\newcommand{\Ep}[2]{{\mathbb{E}_{#1}\left[{#2}\right]}}
\newcommand{\ip}[2]{{\left\langle{#1},{#2}\right\rangle}}
\newcommand{\abs}[1]{\left\lvert{#1}\right\rvert}
\newcommand{\norm}[1]{\left\lVert{#1}\right\rVert}
\newcommand{\z}{z}
\newcommand{\w}{\ensuremath{\mathbf{w}}}
\renewcommand{\v}{\ensuremath{\mathbf{v}}}
\newcommand{\W}{\ensuremath{\mathbf{W}}}
\newcommand{\x}{\ensuremath{\mathbf{x}}}
\newcommand{\loss}{\ell}
\newcommand{\wopt}{\w^{\star}} 
\newcommand{\F}{{F}}
\newcommand{\X}{\mathcal{X}}
\newcommand{\Y}{\mathcal{Y}}
\newcommand{\Rad}{R}
\newcommand{\mc}[1]{\mathcal{#1}}
\newcommand{\mbb}[1]{\mathbb{#1}}
\newcommand{\mbf}[1]{\mathbf{#1}}
\newcommand{\hL}{\hat{L}}
\renewcommand{\L}[1]{L\left(#1\right)}
\newcommand{\cL}{\mc{L}}
\newcommand{\hw}{\hat{\w}}
\newcommand{\eh}{\hat{h}}
\renewcommand{\Rad}{\mc{R}}
\renewcommand{\H}{\mc{H}}
\renewcommand{\Z}{\mc{Z}}
\newcommand{\hRad}{\mc{R}}
\newcommand{\fat}{\mathrm{fat}}
\newcommand{\err}{\mathrm{err}}
\newcommand{\herr}{\widehat{\mathrm{err}}}
\newcommand{\Es}[2]{\mbb{E}_{#1}\left[ #2 \right]}
\begin{document}
\setlength{\parskip}{2mm}
\setlength{\parindent}{0pt} 

\title{Optimistic Rates for Learning with a Smooth Loss}
\author{
Nathan Srebro \hspace{0.5in} Karthik Sridharan \\ 
Toyota Technological Institute at Chicago\\ 6045 S Kenwood Ave., Chicago, IL 60637
\vspace{0.3in} \\
Ambuj Tewari \\
Department of Computer Science \\
University of Texas at Austin }

\date{}
\maketitle

\begin{abstract}
  We establish an excess risk bound of $\widetilde{O}\left(H \Rad_n^2 +
    \sqrt{H L^*}\Rad_n\right)$ for empirical risk minimization with an $H$-smooth loss
  function and a hypothesis class with Rademacher complexity $\Rad_n$,
  where $L^*$ is the best risk achievable by the hypothesis class.
  For typical hypothesis classes where $\Rad_n = \sqrt{R/n}$, this
  translates to a learning rate of $\widetilde{O}\left(RH/n\right)$ in the
  separable ($L^*=0$) case and $\widetilde{O}\left(RH/n + \sqrt{L^*
      RH/n}\right)$ more generally.
  We also provide similar
  guarantees for online and stochastic convex optimization with a
  smooth non-negative objective.
\end{abstract}

\section{Introduction}

Consider empirical risk minimization for a hypothesis class $\H = \{ h:\X
\rightarrow \R\}$ with respect to some non-negative loss function $\phi(t,y)$.
That is, we would like to learn a predictor $h$ with small risk
$$\L{h}=\E{\phi(h(X),Y)}$$ by minimizing the empirical risk
$$\hL(h)=\frac{1}{n}\sum_{i=1}^n\phi(h(x_i),y_i)$$
given an i.i.d.~sample $(x_1,y_1),\ldots,(x_n,y_n)$.

Statistical guarantees on the excess risk are well understood for {\em
  parametric} (i.e.~finite dimensional) hypothesis classes.  More
formally, these are hypothesis classes with finite VC-subgraph
dimension \citep{Pollard84} (also known as the pseudo-dimension).  For such classes, learning
guarantees can be obtained for any bounded loss function
(i.e. any $\phi$ such that $\abs{\phi}\leq b < \infty$) and the relevant measure of
complexity is the VC-subgraph dimension.

Alternatively, even for some non-parametric hypothesis classes
(i.e.~those with infinite VC-subgraph dimension), e.g.~the class of
low-norm linear predictors
$$\H_B =\left\{ h_{w}:\x\mapsto\ip{\w}{\x} \;\middle|\; \norm{\w}_2 \leq B
\right\},$$ guarantees can be obtained in terms of {\em
  scale-sensitive} measures of complexity such as fat-shattering
dimensions \citep{AlonBeCeHa93}, covering numbers \citep{Pollard84} or Rademacher complexity
\citep{BartlettMe02}.  The classical statistical learning theory approach for
obtaining learning guarantees for such scale-sensitive classes is to
rely on the Lipschitz constant $D$ of $\phi(t,y)$ with respect to its first argument $t$.
If the loss is differentiable then this amounts to an upper bound
on the magnitude of the first derivative with respect to $t$.
The excess risk can then
be bounded as (expectation here is over the sample):
\begin{align}
  \E{\L{\hat{h}}} &\leq L^* + 2 D \Rad_n(\H) \notag \\
 &= L^* + 2 \sqrt{D^2 \frac{R}{n}}   \label{eq:lip}
\end{align}
where $\hat{h}=\arg\min_h \hL(h)$ is the empirical risk minimizer
(ERM), $L^*=\inf_h \L{h}$ is the minimal possible risk in $\H$, and
$\Rad_n(\H)$ is the Rademacher complexity of the class $\H$.  The
Rademacher complexity typically scales as $\Rad_n(\H)=\sqrt{R/n}$,
yielding the expression on the second line.  For instance, in the case
of $\ell_2$-bounded linear predictors, $R = B^2 \norm{X}_2^2$ where
$\norm{X}_2 = \sup_{\x\in\X} \norm{\x}_2$.  The Rademacher complexity
can be bounded by other scale-sensitive complexity measures, such as
the fat-shattering dimensions and covering numbers, yielding similar
guarantees in terms of these measures.

In this paper, we address two deficiencies of the guarantee
\eqref{eq:lip}.  

First, the bound applies only to loss functions with
bounded derivative, like the hinge and logistic losses (popular for
classification), or the absolute-value loss (for regression).
It is not directly applicable to the squared loss
$\phi(t,y)={\scriptstyle\tfrac{1}{2}}(t-y)^2$, for which the {\em second}
derivative is bounded, but not the first.  We could try to simply
bound the derivative of the squared loss in terms of a bound on the
magnitude of $h(x)$, but for norm-bounded linear predictors
$\H_B$, for instance, this results in a very disappointing excess risk bound of the
form $O(\sqrt{ B^4 \norm{X}_2^4 / n })$.  One aim of this paper
is to provide clean bounds on the excess risk for smooth loss
functions, such as the squared loss, with a bounded \emph{second}, rather then
first, derivative.

The second deficiency of \eqref{eq:lip} is the dependence on
the sample size $n$.  The $1/\sqrt{n}$ dependence might be unavoidable in
general.  But at least for finite dimensional (parametric) classes, we
know it can be improved to a $1/n$ rate when the distribution is
separable\footnote{Several binary classification losses evaluate to zero when the ``margin" $yh(x)$ is sufficiently positive.
For such losses, if the distribution is separable by some $h^* \in \H$ with enough margin, we have $L^* = 0$. This explains why
we use the term ``separable" to denote $L^* = 0$ for general losses.}, i.e.~when there exists $h \in \H$ with $\L{h}=0$ and so
$L^*=0$.  In particular, if $\H$ is a class of bounded functions with
VC-subgraph-dimension $d$
(e.g.~$d$-dimensional linear predictors), then \citep{Panchenko02}:
\begin{equation}
  \label{eq:panchenko}
  \E{\L{\hat{h}}} \leq L^* + O\left( \frac{ d D \log n}{n} + \sqrt{ \frac{d D L^* \log n}{n}} \right) \ .
\end{equation}
Notice that the $1/\sqrt{n}$ term disappears in the separable case, and we get a graceful degredation between the $1/\sqrt{n}$ non-separable rate and the $1/n$ separable rate.  The sample complexity (number of samples needed to guarantee that excess risk is smaller than $\epsilon$) associated with the learning rate above is given by:
$$
n = O\left(\frac{d D}{\epsilon} \left(\frac{L^* + \epsilon}{\epsilon}\right)  \log\left(\frac{d D}{\epsilon}\right)\right) \ .
$$
In the separable case, when $L^*=0$, as well as in the non-separable
case as long as we are concerned with excess error $\epsilon$ which is
not much smaller then $L^*$ (roughly speaking, an estimation
error not much smaller than the optimal risk), the term
$(L^*+\epsilon)/\epsilon$ can be thought of as constant, and the
sample complexity scales roughly as $1/\epsilon$.  Only when we seek
excess error $\epsilon$ much smaller then $L^*$, might we get a
$1/\epsilon^2$ scaling.  We refer to such a rate as an ``optimistic
rate''.\footnote{We have borrowed the term ``optimistic'' for the
  rates we provide from Dmitry Panchenko's lecture notes.}  

As we will show, the two deficiencies are actually related.  For
non-parametric classes, and non-smooth Lipschitz loss, such as the
hinge-loss, the excess risk might scale as $1/\sqrt{n}$ and not $1/n$,
{\em even in the separable case}.  However, for $H$-smooth non-negative loss functions, where the second derivative of $\phi(t,y)$ with respect to $t$ is
bounded by $H$, a $1/n$ separable rate {\em is} possible.  In Section
\ref{sec:rad} we obtain the following bound (with high probability) on the excess risk (up to
logarithmic factors):
\begin{align}
  \label{eq:intromain}
  \L{\hat{h}} &\leq L^* + \widetilde{O}\left( H\Rad^2_n(\H) + \sqrt{H L^*}
    \Rad_n(\H) \right) \notag \\
  &= L^* + \widetilde{O}\left( \frac{H R}{n} +
    \sqrt{\frac{H R L^*}{n}} \right) \;\;\leq\;\; 2L^* +\tilde{O}\left(\frac{H R}{n}\right).
\end{align}
where again the second line corresponds to the typical scaling
$\Rad_n(\H)=\sqrt{R/n}$.  In this case, we obtain the following bound
on the sample complexity required for excess error $\L{\hat{h}} \leq
L^* + \epsilon$:
$$
n \le O\left( \frac{R}{\epsilon}\left(\frac{L^* + \epsilon}{\epsilon} \right) \log^{3}(R/\epsilon) \right) \ .
$$
In particular, for $\ell_2$-norm-bounded linear predictors $\H_B$ with $\norm{X}_2^2 \leq 1$, the excess risk is bounded by $\widetilde{O}(HB^2/n + \sqrt{HB^2L^*/n})$.  Another interesting distinction between
parametric and non-parametric classes is that, even for the
squared-loss, the bound \eqref{eq:intromain} is tight and the
non-separable rate of $1/\sqrt{n}$ is unavoidable.  This is in
contrast to the parametric (finite dimensional) case, where a rate of
$1/n$ is always possible for the squared loss, regardless of the
value of $L^*$ \citep{LeeBaWi98}.  The differences between
parametric and scale-sensitive classes, and between non-smooth, smooth
and strongly convex (e.g.~squared) loss functions are discussed in
Section \ref{sec:tight} and summarized in Table \ref{tab:tight}.

The guarantees discussed thus far are general learning guarantees for
the stochastic setting that rely only on the Rademacher complexity of
the hypothesis class, and are phrased in terms of minimizing some
scalar loss function.  In Section \ref{sec:convex}, we consider also
the online setting, in addition to the stochastic setting, and present
similar guarantees for online and stochastic convex optimization
\citep{Zinkevich03,ShalevShSrSr09}. The guarantees of Section
\ref{sec:convex} match equation \eqref{eq:intromain} for the special
case of a convex loss function and norm-bounded linear predictors, but
Section \ref{sec:convex} captures a more general setting of optimizing
an arbitrary non-negative smooth convex objective (there is no separate discussion of a ``predictor'' and a
scalar loss function in Section \ref{sec:convex}). Results in Section
\ref{sec:convex} are expressed in terms of properties of the norm,
rather then a measure of statistical complexity like the Radamacher complexity
as in \eqref{eq:intromain} and Section \ref{sec:rad}.  However, the
online and stochastic convex optimization setting of Section
\ref{sec:convex} is also more restrictive, as we require the objective
to be convex (while for the bound
\eqref{eq:intromain} we make no assumption about the convexity of the
hypothesis class $\H$ nor the loss function $\phi$).

Specifically, for a non-negative $H$-smooth {\em convex} objective
(see exact definition in Section \ref{sec:convex}), over a domain
bounded by $B$, we prove that the average online regret (and so also
the excess risk of stochastic optimization) is bounded by $O(HB^2/n +
\sqrt{HB^2L^*/n})$.  Comparing with the bound of $O(\sqrt{D^2 B^2/n})$
when the loss is $D$-Lipschitz rather then $H$-smooth
\citep{Zinkevich03,NemirovskiYu78}, we see the same relationship
discussed above for ERM.  Unlike the bound \eqref{eq:intromain} for
the ERM, the convex optimization bound avoids polylogarithmic factors.
The results in Section \ref{sec:convex} also generalize to smoothness
and boundedness with respect to non-Euclidean norms.

Studying the online and stochastic convex optimization setting
(Section \ref{sec:convex}), in addition to ERM (Section
\ref{sec:rad}), has several advantages.  First, it allows us to obtain
a learning guarantee for an efficient single-pass learning method,
namely stochastic gradient descent (or mirror descent), as well as for non-stochastic regret.  Second, the bound we obtain in the convex
optimization setting (Section \ref{sec:convex}) is actually better
then the bound for the ERM (Section \ref{sec:rad}) as it avoids all
polylogarithmic and large constant factors.  Third, the bound is
applicable to other non-negative online or stochastic optimization
problems beyond classification, including problems for which ERM is
not applicable (see, e.g.,~\citep{ShalevShSrSr09}).

In order to establish our main result we go back and forth between covering numbers and Radamacher complexity, and for this purpose we include in Appendix A results
establishing tight relationships between the various complexity measures. These results might also be of independent interest to readers.

\section{Empirical Risk Minimization with a Smooth Loss}\label{sec:rad}

Recall that the worst-case Rademacher complexity \citep{BartlettMe02} of $\H$ for any $n \in \mathbb{N}$ is given by:
\begin{equation}
  \label{eq:raddef}
\hRad_n(\H) = \sup_{x_1,\ldots , x_n \in \X} \Es{\sigma \sim \text{Unif}(\{\pm 1\}^n)}{\sup_{h \in \H} \frac{1}{n} \left|\sum_{i=1}^n  h(x_i) \sigma_i \right|}.
\end{equation}
Throughout, we shall consider this ``worst case" Rademacher complexity.

Our starting point is the learning bound \eqref{eq:lip} that applies
to $D$-Lipschitz loss functions, i.e. such that $\abs{\phi'(t,y)}\leq D$ (we
always take derivatives with respect to the first argument).  What type of
bound can we obtain if we instead bound the second derivative
$\phi''(t,y)$?  We will actually avoid talking about the second
derivative explicitly, and instead say that a function is $H$-smooth
iff its derivative is $H$-Lipschitz.  For twice differentiable $\phi$,
this just means that $\abs{\phi''}\leq H$.  The central observation, which
allows us to obtain  guarantees for smooth loss functions, is that for
a smooth loss, the derivative can be bounded in terms of the function value:
\begin{lemma}\label{lemma:smooth}
For an $H$-smooth non-negative function $f : \mathbb{R} \mapsto \mathbb{R}$, we have:
$$\abs{f'(t)}  \le \sqrt{4 H f(t)}\ .$$
\end{lemma}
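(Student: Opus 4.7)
The plan is to use the standard quadratic upper bound that comes from $H$-smoothness, evaluate it at the optimal point, and combine with non-negativity of $f$.

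Since $f'$ is $H$-Lipschitz, a standard consequence (via the fundamental theorem of calculus applied to $f(s) - f(t) = \int_t^s f'(u)\, du$ and bounding $|f'(u) - f'(t)| \le H|u-t|$) is the quadratic upper bound
\[
f(s) \le f(t) + f'(t)(s-t) + \frac{H}{2}(s-t)^2 \qquad \text{for all } s, t \in \mathbb{R}.
\]
First I would establish this inequality (it is essentially an exercise and holds for all $H$-smooth $f$, not just twice-differentiable ones).

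Next I would minimize the right-hand side over $s$ for fixed $t$. The quadratic in $s$ is minimized at $s^* = t - f'(t)/H$, and plugging this in gives
\[
f(s^*) \le f(t) - \frac{(f'(t))^2}{2H}.
\]
Using the hypothesis $f(s^*) \ge 0$, we get $(f'(t))^2 \le 2H f(t)$, hence $|f'(t)| \le \sqrt{2H f(t)} \le \sqrt{4H f(t)}$, which is the claimed bound (with room to spare in the constant).

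There is no real obstacle here; the only subtlety is justifying the smoothness-to-quadratic-bound step without assuming twice differentiability, which is handled cleanly by integrating $f'$. The factor $4$ rather than the sharper $2$ in the statement is presumably for notational convenience in the later calculations that invoke this lemma.
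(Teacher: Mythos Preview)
Your proof is correct and follows essentially the same approach as the paper: obtain a quadratic upper bound from $H$-smoothness, plug in the minimizing point, and invoke non-negativity. The only difference is that the paper derives the quadratic bound via the mean value theorem with the cruder coefficient $H$ (rather than your $H/2$), which is exactly why their constant is $4$ while you obtain the sharper $2$.
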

\begin{proof}
For any $t < r$, there is an $s \in [t,r]$ for
which $f(r) = f(t) + f'(s)(r- t)$.  Now:
\begin{align*}
0 & \le  f(r) = f(t) + f'(t)(r - t) + (f'(s) - f'(t))(r- t) \\
& \le f(t) + f'(t)(r - t) + H \abs{s-t}\abs{r - t} \le f(t) + f'(t)(r - t) + H (r - t)^2
\end{align*} 
Setting $r = t - \frac{f'(t)}{2 H}$ yields the desired bound.
\end{proof}

The above lemma allows us to argue that close to the optimum value, where
the {\em value} of the loss is small, then so is its derivative.
Looking at the dependence of \eqref{eq:lip} on the derivative bound
$D$, we are guided by the following heuristic argument : since we
should be concerned only with the behavior around the ERM, perhaps it
is enough to bound $\phi'(\hat{h}(x),y)$ at the ERM $\hat{h}$.  Applying
Lemma \ref{lemma:smooth} to $L(\hat{h})$, we can bound
$\abs{\E{\phi'(\hat{h}(x),y)}} \leq \sqrt{4 H L(\hat{h})}$.  What we
would actually want is to bound each $\abs{\phi'(\hat{\w},x)}$
separately, or at least have the absolute value {\em inside} the
expectation---this is where the non-negativity of the loss plays an
important role.  Ignoring this important issue for the moment and
plugging this instead of $D$ into \eqref{eq:lip} yields $L(\hat{h})
\leq L^* + 4\sqrt{H L(\hat{h})} \Rad_n(\H)$.  Solving for $L(\hat{h})$
yields the desired bound \eqref{eq:intromain}.

This rough intuition is captured by the following theorem.

\begin{theorem}\label{thm:mainRad}
  Let $\phi$ be an $H$-smooth non-negative loss s.t.  $\forall {\hat{y},\hat{y}',y},\ \abs{\phi(\hat{y},y) - \phi(\hat{y}',y)}\leq
  b$. Then, for any $\delta > 0$ we have, with probability at least $1 -
  \delta$ over a random sample of size $n$, for any $h \in \H$,
{\small
\begin{align*}
\L{h} \le \hL(h) +K\left( \sqrt{\hL(h)} \left(\sqrt{H} \log^{1.5}\! n\  \hRad_n(\H)  + \sqrt{\frac{b \log(1/\delta)}{n}}\right)+ H \log^{3}\!n\  \hRad_n^2(\H)  + \frac{b \log(1/\delta)}{n}\right)
\end{align*}
and so:
\vspace{-0.1in}
\begin{align*}
\L{\eh} \le L^* + K \left( \sqrt{L^*} \left(\sqrt{H} \log^{1.5}\!n\ \hRad_n(\H)  + \sqrt{\frac{b \log(1/\delta)}{n}}\right)+  H \log^{3}\!n\  \hRad_n^2(\H)  + \frac{b \log(1/\delta)}{n}\right)
\end{align*}
}
where $K < 10^5$ is a numeric constant derived from \citep{Mendelson02} and \citep{Bousquet02}. 
\end{theorem}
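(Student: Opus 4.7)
The plan is to combine Lemma \ref{lemma:smooth} with the local Rademacher complexity machinery of \cite{Mendelson02} and Bousquet's version of Talagrand's inequality \cite{Bousquet02}. The role of smoothness is to give the loss class $\mathcal{F} = \{(x,y)\mapsto \phi(h(x),y) : h \in \H\}$ a Bernstein-style ``variance $\lesssim$ mean'' condition: because each $\phi(\cdot,y)$ is non-negative and $H$-smooth, Lemma \ref{lemma:smooth} implies that its derivative at any point is controlled by its value there, so along any segment the local Lipschitz constant of $\phi$ scales like $\sqrt{Hf}$ rather than being a fixed constant.

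First I would bound the local Rademacher complexity of $\mathcal{F}$. For the shell $\mathcal{F}(r)=\{f\in\mathcal{F}:\E{f}\le r\}$, a peeling argument over dyadic level sets of the loss together with Lemma \ref{lemma:smooth} and a shell-by-shell Ledoux--Talagrand contraction yield, up to logarithmic factors, $\hRad_n(\mathcal{F}(r))\lesssim \sqrt{Hr}\,\hRad_n(\H)$. The variance condition is then immediate from $0\le\phi\le b$, which gives $\E{f^2}\le b\,\E{f}$ for every $f\in\mathcal{F}$. Feeding both ingredients into Bousquet's inequality applied to $\mathcal{F}$ produces, uniformly in $h$ and with probability at least $1-\delta$,
\begin{equation*}
L(h)-\hat{L}(h)\ \lesssim\ \E{\sup_{h'\in\H}\!(L(h')-\hat{L}(h'))}+\sqrt{\tfrac{bL(h)\log(1/\delta)}{n}}+\tfrac{b\log(1/\delta)}{n},
\end{equation*}
and plugging the shell bound into the sub-root fixed-point lemma of \cite{Mendelson02} converts the expected supremum into a bound of the form $\sqrt{HL(h)}\,\hRad_n(\H)\log^{1.5}n+H\hRad_n^2(\H)\log^3 n$.

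The result is a self-bounding inequality $L(h)\le \hat{L}(h)+A\sqrt{L(h)}+B$ that one resolves using $ab\le\tfrac12 a^2+\tfrac12 b^2$ to swap $\sqrt{L(h)}$ for $\sqrt{\hat{L}(h)}$, giving the first stated inequality. The second follows by applying the first to $\hat h$, observing $\hat{L}(\hat h)\le \hat{L}(h^\star)$ for any near-minimizer $h^\star$, and bounding $\hat{L}(h^\star)-L^\star$ by a one-sided Bernstein inequality using the same variance control. I expect the main obstacle to be the peeling step: since the Lipschitz constant of $\phi$ is \emph{not} uniformly bounded, the vanilla contraction principle cannot be applied to the whole loss class at once, so one must partition $\H$ into $O(\log n)$ shells on which $\phi$ is effectively Lipschitz with constant $\sim\sqrt{Hr_k}$, apply contraction shell-by-shell, and take a union bound---this is what produces the $\log^{1.5}n$ and $\log^3 n$ factors and, after chaining through Mendelson's and Bousquet's constants, the very large numerical constant $K<10^5$.
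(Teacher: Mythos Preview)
Your high-level plan---local Rademacher complexity plus Bousquet's concentration and a sub-root fixed point---matches the paper's, and the final self-bounding step is fine. The crucial difference, and the place where your proposal has a real gap, is in how you obtain the key estimate $\hRad_n(\cL_\phi(r))\lesssim \sqrt{Hr}\,\hRad_n(\H)$.

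You propose to peel $\H$ into $O(\log n)$ shells and apply Ledoux--Talagrand contraction on each shell with Lipschitz constant $\sim\sqrt{Hr_k}$. But the restriction defining a shell is an \emph{average} constraint ($\hat L(h)\le r$ or $\E{f}\le r$), whereas contraction needs a \emph{pointwise} Lipschitz bound: for every sample point $(x_i,y_i)$ and every $h$ in the shell you would need $|\phi'(h(x_i),y_i)|\lesssim\sqrt{Hr}$. Lemma~\ref{lemma:smooth} only gives $|\phi'(h(x_i),y_i)|\le\sqrt{4H\phi(h(x_i),y_i)}$, and an individual loss value $\phi(h(x_i),y_i)$ can be as large as $b$ even when the average is $\le r$. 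So on any shell defined by average loss, the only uniform Lipschitz constant available is $\sqrt{4Hb}$, which loses the $\sqrt r$ factor and with it the fast rate. A peeling over $h$ (rather than over sample points) does not appear to circumvent this.

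The paper avoids contraction entirely and goes through covering numbers instead. The key device is a second-order consequence of Lemma~\ref{lemma:smooth}: for non-negative $H$-smooth $f$, $(f(t)-f(s))^2 \le 6H\,(f(t)+f(s))\,(t-s)^2$ (Lemma~\ref{lem:smoothbnd}). Summing over the sample turns this into an $L_2$-to-$L_\infty$ covering relation,
\[
\mathcal{N}_2\bigl(\cL_\phi(r),\epsilon,n\bigr)\ \le\ \mathcal{N}_\infty\!\Bigl(\H,\tfrac{\epsilon}{\sqrt{12Hr}},n\Bigr),
\]
because the factor $\tfrac1n\sum_i\bigl(\phi(h(x_i),y_i)+\phi(h_\epsilon(x_i),y_i)\bigr)$ is bounded by $2r$ on the restricted class. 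This is exactly where the average constraint is exploited; note that it is the $L_\infty$ cover of $\H$ that appears on the right, precisely because pointwise control of $|h-h_\epsilon|$ is what is needed to compensate for the pointwise-varying Lipschitz constant. The argument then closes the loop $\hRad_n(\cL_\phi(r))\to\mathcal{N}_2\to\mathcal{N}_\infty\to \fat_\epsilon(\H)\to \hRad_n(\H)$ via Dudley's integral, Mendelson's $L_\infty$-cover bound, and a fat-shattering-to-Rademacher estimate (Lemma~\ref{lem:fatrad}); the $\log^{1.5}n$ and $\log^3 n$ factors arise from this chain, not from a union bound over shells.
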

Note that only the ``confidence'' terms depended on $b=\sup
\abs{\phi}$, and this is typically not the dominant term. We
believe it is possible to also obtain a bound that holds in
expectation over the sample (rather than with high probability) and
that avoids a direct dependence on $\sup\abs{\phi}$.

The following simple corollary of the above theorem bounds the sample complexity of learning with smooth loss functions.
\begin{corollary}
Assume that for any $n \ge 1$ the Rademacher complexity of function class $\mathcal{H}$ can be bounded as $\hRad_n(\H) \le \sqrt{\frac{R}{n}}$. Then given any $H$-smooth non-negative loss $\phi$ bounded by $b$ and any $\delta, \epsilon > 0$, the number of samples $n$, required to guarantee that with probability at least $1 - \delta$, $L(\hat{h}) - L^* \le \epsilon$ is bounded as
\begin{align*}
n \le O\left( \left(\frac{R \log^{3}(R/\epsilon)  + b \log(1/\delta)}{\epsilon}\right) \left(\frac{L^* + \epsilon}{\epsilon} \right) \right) \ .
\end{align*}
\end{corollary}
\begin{remark} With slight modifications in the proof, one can replace the $\log^3(R/\epsilon)$ term above with $\log^3(B/\epsilon)$ where $B  := \sup_{\x \in \X, h \in \H} h(x)$ is the bound on functions in the hypothesis class $\H$.\end{remark}

To prove Theorem \ref{thm:mainRad}, we use the notion of local
Rademacher complexity \citep{BartlettBoMe05}, which allows us to focus
on the behavior of $\H$ in the vicinity of the ERM.  To this end, consider the following
empirically restricted loss class
$$
\cL_\phi(r) := \left\{(x,y) \mapsto \phi(h(x),y) : h \in \H , \hL(h) \le r\right\} \ .
$$
Lemma \ref{lem:key} presented below, is the key to the proof of the main theorem and solidifies the heuristic intuition discussed above. It shows that the Rademacher complexity of class $\cL_\phi(r)$ scales as $\sqrt{Hr}$.  The lemma can be seen as a higher-order version of the Lipschitz composition lemma \citep{BartlettMe02}, which states that the Rademacher complexity of the {\em unrestricted} loss class is bounded by $D \Rad_n(\H)$.  Here, we use the second, rather then first, derivative, and obtain a bound that depends on the empirical restriction:

\begin{lemma}\label{lem:key}
For a non-negative $H$-smooth loss and any function class $\H$, we have: 
\begin{align*}
\hRad_n(\cL_\phi(r)) & \le  21 \sqrt{6 H r}\,  \log^{\frac{3}{2}}\left(64\, n\right)\, \Rad_n(\mathcal{H}) \ .
\end{align*}
\end{lemma} 
\textit{Proof outline for Lemma \ref{lem:key}. }
We delay the detailed proof of the lemma to the appendix and provide an outline of the proof here. In order to prove the lemma, we actually move from Rademacher complexity to covering numbers, use smoothness and Lemma \ref{lemma:smooth} to obtain an $r$-dependent cover of the empirically restricted class, and then return to the Rademacher complexity.  More specifically the proof is outlined as follows :
\vspace{-0.18in}
\begin{enumerate}
  \setlength{\topsep}{0pt}
  \setlength{\itemsep}{1pt}
  \setlength{\parskip}{0pt}
  \setlength{\parsep}{0pt}
\item We use a modified version of Dudley's integral to bound the
  Rademacher complexity of the empirically restricted loss class in terms
  of the $L_2$-covering numbers of the class.
\item We use smoothness to get an $r$-dependent bound on the
  $L_2$-covering numbers of the empirically restricted loss class in
  terms of $L_{\infty}$-covering numbers of the unrestricted
  hypothesis class.
\item We bound the $L_{\infty}$-covering numbers of the unrestricted
  class in terms of its fat-shattering dimension, which in turn can be
  bounded in terms of its Rademacher complexity.
\end{enumerate}

\begin{proof}[\bf Proof of Theorem \ref{thm:mainRad}]
By Theorem 6.1 of \citep{Bousquet02} (specifically the displayed equation prior to the last one in the proof of the theorem) we have that if $\psi_n$ is any sub-root function that satisfies for all $r > 0$,  $\hRad_n(\cL_\phi(r)) \le \psi_n(r)$ then, for any $\delta >0$, with probability at least $1 - \delta$, for any $h \in \H$, 
\begin{equation} \label{eq:olivier}
\L{h}  \le \hL(h) + 45 r^*_n + \sqrt{\L{h}} \left( \sqrt{8 r^*_n} + \sqrt{\frac{4 b(\log\left(\tfrac{1}{\delta}\right) + 6 \log \log n )}{n}} \right) + \frac{20 b(\log\left(\tfrac{1}{\delta}\right)  + 6 \log \log n )}{n}
\end{equation}
where $r^*_n$ is the largest solution to equation $\psi_n(r) = r$.
Now by Lemma \ref{lem:key} we have that $\psi_n(r)=21 \sqrt{6 H r} \log^{1.5} n \hat{\mathcal{R}}_n(\H)$ satisfies the property that for all $r > 0$,  $\hRad_n(\cL_\phi(r)) \le \psi_n(r)$ and so using this we see that 
$$
r^*_n = 2646 H \log^3(64\, n) \hRad^2_n(\H)
$$
and for this $r^*_n$, the upper bound~\eqref{eq:olivier} holds. Now using the simple fact that for any non-negative $A,B,C$,  
$$
A \le B + C\sqrt{A} \Rightarrow A \le B + C^2 + \sqrt{B}C
$$
we conclude,
\begin{equation}\label{eq:loct}
\L{h} \le \hL(h) + 106\ r^*_n  + \frac{48 b}{n}  \left(\log\tfrac{1}{\delta} + \log \log\ n \right) + \sqrt{\hL(h) \left(8 r^{*}_n + \frac{4 b}{n}  \left(\log\tfrac{1}{\delta} + \log \log\ n \right) \right)}  \ .
\end{equation}
Now we claim that $\frac{4 b \log \log\ n}{n} \le 0.049 r^*_n$. To see this first note that by definition of $b$,
\begin{align*}
b & = \max_{y,\hat{y},\hat{y}'} \left(\phi(\hat{y},y) - \phi(\hat{y}',y)\right) \le \max_{y,\hat{y},\hat{y}'} \left|\phi'(\hat{y},y)\right| \left|\hat{y} - \hat{y}'\right|
\end{align*}
Now notice that in the proof of Lemma \ref{lemma:smooth} we in fact first showed that $|f'(t)| \le \sqrt{4 H (f(t) - f(r))}$ for any $r > t$ and only then using the fact that $f$ is non-negative we concluded that $|f'(t)| \le \sqrt{4 H f(t)}$. Hence we can conclude that $\left|\phi'(\hat{y},y)\right| \le \sqrt{4 H b}$. Hence using this in the above inequality we can conclude that 

$$
b  \le  4 H \max_{\hat{y},\hat{y}'}(\hat{y} - \hat{y}')^2 \le 16 H \max_{\hat{y}} |\hat{y}|^2 = 16 H \max_{x, h \in \H} |h(x)|^2
$$ 
Now on the other hand by definition of Rademacher complexity and by Khintchine's inequality we have that $\Rad_n(\H) \ge \sup_{x,y, h \in \H} |\phi(h(x),y)|/\sqrt{2n}$. Thus we have shown that 
$$
\frac{4 b \log \log\ n}{n}  \le \frac{64 H \sup_{x,y, h \in \H} |\phi(h(x),y)|^2 \log \log n }{n} \le 128 H \log \log n\  \Rad^2_n(\H) \le 0.049 r^*_n
$$
Plugging this back in Equation \ref{eq:loct} we see that 
\begin{align*}
\L{h} \le \hL(h) + 109\ r^*_n  + \frac{48 b \log\tfrac{1}{\delta}}{n}   + \sqrt{\hL(h) \left(9 r^{*}_n + \frac{4 b \log\tfrac{1}{\delta} }{n}  \right)}  \ .
\end{align*}
Plugging in the value of $r^*_n = 2646 H \log^3(64\, n) \hRad^2_n(\H)$ we get the first inequality.  To get the second inequality, we simply use the first inequality with the ERM $\hat{h}$ and further note that $\hat{L}(\hat{h}) \le \hat{L}(h^*)$ (where $h^*$ is  $\argmin{h \in \H}L(h)$). This gives us a bound of
\begin{align}\label{eq:almost}
\L{\hat{h}} \le \hL(h^*) + 109\ r^*_n  + \frac{48 b \log\tfrac{1}{\delta} }{n}  + \sqrt{\hL(h^*) \left(9 r^{*}_n + \frac{4 b \log\tfrac{1}{\delta}}{n}   \right)}  \ .
\end{align}
Now to conclude the proof notice that by Bernstein's inequality, with probability at least $1 - \delta$  :
\begin{align}
 \hL(h^*) - L^* & \le \sqrt{\frac{4 \E{(\phi(h^*(x),y) - \L{h^*})^2} \log\tfrac{1}{\delta}}{n}} + \frac{4 b \log\tfrac{1}{\delta}}{n} \notag \\
& \le \sqrt{\frac{8 b \L{h^*} \log\tfrac{1}{\delta}}{n}} + \frac{4 b \log\tfrac{1}{\delta}}{n} \label{eq:gt}
\end{align}
Hence using the above in Equation \ref{eq:almost}
we get that 
\begin{align*}
\L{\hat{h}} \le L^* + 109\ r^*_n  + \frac{52 b \log\tfrac{1}{\delta} }{n}  + \sqrt{\hL(h^*) \left(9 r^{*}_n + \frac{4 b \log\tfrac{1}{\delta}}{n}   \right)} +  \sqrt{\frac{8 b L^* \log\tfrac{1}{\delta}}{n}}  \ .
\end{align*}
Again Equation \ref{eq:gt} implies that with probability $1 - \delta$, $\hat{L}(h^*) \le \frac{3}{2} L^* + \frac{8 b  \log\tfrac{1}{\delta}}{n} $ and so using this in the above we conclude that
\begin{align*}
\L{\hat{h}} \le L^* + 109\ r^*_n  + \frac{52 b \log\tfrac{1}{\delta} }{n}  + \sqrt{\left( \frac{3}{2} L^* + \frac{8 b  \log\tfrac{1}{\delta}}{n} \right) \left(9 r^{*}_n + \frac{4 b \log\tfrac{1}{\delta}}{n}   \right)} +  \sqrt{\frac{8 b L^* \log\tfrac{1}{\delta}}{n}}  \ .
\end{align*}
Plugging in $r^*_n$ and over bounding with appropriate numeric constant $K$ concludes the proof.
\end{proof}

\subsection{Related Results}

Rates faster than $1/\sqrt{n}$ have been previously explored under
various conditions, including when $L^*$ is small. 

\paragraph{The Finite Dimensional Case}

\cite{LeeBaWi98} showed faster rates for squared loss,
exploiting the strong convexity of this loss function, even when
$L^*>0$, but only with finite VC-subgraph-dimension.
\cite{Panchenko02} provides optimistic rate results for general Lipschitz
bounded loss functions, still in the finite VC-subgraph-dimension
case.  \cite{Bousquet02} provided similar guarantees for
linear predictors in Hilbert spaces when the spectrum of the kernel
matrix (covariance of $X$) is exponentially decaying, making the
situation almost finite dimensional.  All these methods rely on
finiteness of effective dimension to provide fast rates.  In this
case, smoothness is not necessary.  Our method, on the other hand,
establishes optimistic rates (and a fast rate when $L^*=0$), for function classes that do
{\em not} have finite VC-subgraph-dimension.  In Section \ref{sec:tight} We show how in the
non-parametric case, smoothness is necessary for optimistic rates and how it plays an important
role (see also Table \ref{tab:tight}).

\paragraph{Aggregation}

\cite{Tsybakov04} studied learning rates for aggregation,
where a predictor is chosen from the convex hull of a finite set of
base predictors.  This is equivalent to an $\ell_1$ constraint where
each base predictor is viewed as a ``feature''.  As with
$\ell_1$-based analysis, since the bounds depend only logarithmically
on the number of base predictors (i.e. dimensionality), and rely on
the scale of change of the loss function, they are of a ``scale
sensitive'' nature.  For such an aggregate classifier, Tsybakov
obtained a rate of $1/n$ when zero (or small) risk is achieved by one
of the base classifiers.  In Tsybakov's result, it is not enough
to assume that zero risk is achieved by an aggregate (i.e.~bounded $\ell_1$)
classifier in order to obtain the faster rate.  Tsybakov's core result
is thus in a sense more similar to the finite dimensional results,
since it allows for a rate of $1/n$ when zero error is achieved by a
finite cardinality (and hence finite dimension) class.  

Tsybakov then used the approximation error of a small class of
base predictors with respect to a large hypothesis class (i.e.~a covering) to
obtain learning rates for the large hypothesis class by considering
aggregation within the small class.  However these results only imply
fast learning rates for hypothesis classes with very low complexity.
Specifically, to get learning rates better than $1/\sqrt{n}$ using
these results, the covering number of the hypothesis class at scale
$\epsilon$ needs to behave as $1/\epsilon^p$ for some $p < 2$.  But
typical classes, including the class of linear predictors with bounded
norm, have covering numbers that scale as $1/\epsilon^2$ and so these
methods do not imply fast rates for such function classes.  In fact,
to get rates of $1/n$ with these techniques, even when $L^*=0$,
requires covering numbers that do not increase with $\epsilon$ at all,
and so actually requires finite VC-subgraph-dimension.

\cite{ChesneauLe06} extend Tsybakov's work also to
general losses, deriving similar results for Lipschitz loss function.
The same caveats hold: even when $L^*=0$, rates faster when
$1/\sqrt{n}$ require covering numbers that grow slower than
$1/\epsilon^2$, and rates of $1/n$ essentially require finite
VC-subgraph-dimension.  Our work, on the other hand, is applicable
whenever the Rademacher complexity (equivalently covering numbers) can
be controlled.  Although it uses some similar techniques, it is also
rather different from the work of Tsybakov and Chesneau et al., in that
it points out the importance of smoothness for obtaining fast rates in
the non-parametric case: Chesneau et al. relied only on the Lipschitz
constant, which we show, in Section \ref{sec:tight}, is not enough for
obtaining fast rates in the non-parametric case, even when $L^*=0$.

\paragraph{Local Rademacher Complexities}

\cite{BartlettBoMe05} developed a general machinery for
proving possible fast rates based on local Rademacher complexities.
However, it is important to note that the localized complexity term
typically dominates the rate and still needs to be controlled.  For
example, \cite{Steinwart07} used local Rademacher complexity
to provide fast rate on the 0/1 loss of Support Vector Machines (SVMs)
($\ell_2$-regularized hinge-loss minimization) based on the so called
``geometric margin condition'' and Tsybakov's margin condition.
Steinwart's analysis is specific to SVMs.  We also use local
Rademacher complexities in order to obtain fast rates, but do so for
general hypothesis classes, based only on the standard Rademacher
complexity $\hRad_n(\H)$ of the hypothesis classes, as well as the
smoothness of the loss function and the magnitude of $L^*$, but
without any further assumptions on the hypothesis classes itself.

\paragraph{Non-Lipschitz Loss}

We are not aware of prior work providing an
explicit and easy-to-use result for controlling a generic
non-Lipschitz loss (such as the squared loss) solely in terms of the
Rademacher complexity.

\section{A Sharp Understanding of Slow, Optimistic and Fast Rates}\label{sec:tight}

In this section we look at learning rates for the ERM for
parametric and for scale-sensitive hypothesis classes (i.e.~in terms
of the dimensionality and in terms of scale sensitive complexity
measures), discussed in the Introduction and analyzed in Section
\ref{sec:rad}.  We compare the guarantees on the learning rates in
different situations, identify differences between the parametric and
scale-sensitive cases and between the smooth and non-smooth cases, and
argue that these differences are real by showing that the
corresponding guarantees are tight.  Although we discuss the tightness
of the learning guarantees for ERM in the stochastic setting, similar
arguments can also be made for online learning for which algorithms and upper bounds are provided in the next section.

Table \ref{tab:tight} summarizes the bounds on the excess risk of the
ERM implied by Theorem \ref{thm:mainRad} as well previous bounds for Lipschitz
loss on finite-dimensional \citep{Panchenko02} and scale-sensitive
\citep{BartlettMe02} classes, and a bound for squared-loss on
finite-dimensional classes \cite[Theorem 11.7]{CesaBianchiLu06} that
can be generalized to any smooth strongly convex loss.
\begin{table}[h]
\begin{center}
  \begin{tabular}{| l | c | c |}
    \hline
     & Parametric  & Scale-Sensitive \\ 
     Loss function is:& $\text{dim}(\H)\leq d$\;\;,\;\; $\abs{h}\leq 1$ & $\Rad_n(\H) \leq \sqrt{R/n}$ \\ \hline
$D$-Lipschitz   & $\frac{d D}{n} + \sqrt{\frac{d D L^*}{n}}$ & $\sqrt{\frac{D^2 R}{n}}$ \\ \hline
$H$-smooth  & $\frac{d H}{n} + \sqrt{\frac{d H L^*}{n}}$ & $\frac{H R}{n} + \sqrt{\frac{H R L^*}{n}}$\\ \hline
$H$-smooth and $\lambda$-strongly Convex& $\frac{H}{\lambda} \frac{d H}{n}$ & $\frac{H R}{n} + \sqrt{\frac{H RL^*}{n}}$ \\
\hline
  \end{tabular}
\vspace{-0.1in}
{\small \caption{\small Bounds on the excess risk, up to
    polylogarithmic factors.}\label{tab:tight}}
\end{center}
\vspace{-0.1in}
\end{table}

We shall now show that the $1/\sqrt{n}$ dependencies in Table
\ref{tab:tight} are unavoidable.  To do so, we will consider the class
$\H = \left\{\x \mapsto \ip{\w}{\x} : \norm{\w}_2 \le 1\right\}$ of
$\ell_2$-bounded linear predictors (all norms in this Section are
Euclidean), with different loss functions, and various specific
distributions over $\X \times \Y$, where $\X = \left\{\x \in \R^d :
  \|\x\|_2 \le 1\right\}$ and $Y = [0,1]$.  For the non-parametric
lower-bounds, we will allow the dimensionality $d$ to grow with the
sample size $n$.

\noindent {\bf Infinite dimensional, Lipschitz (non-smooth), $\mathbf{L^* = 0}$ case}\\
Consider the absolute difference loss $\phi(h(\x),y) = \abs{h(\x) -
  y}$, take $d=2n$ and consider the following distribution: $X$ is
uniformly distributed over the $d$ standard basis vectors $\mbf{e}_i$
and if $X=\mbf{e}_i$, then $Y = {\scriptstyle \frac{1}{\sqrt{n}}}
r_i$, where $r_1,\ldots,r_d \in \{\pm 1\}$ is an arbitrary sequence of
signs unknown to the learner (say drawn randomly beforehand).  Taking
$\wopt = {\scriptstyle \frac{1}{\sqrt{n}}} \sum_{i=1}^n r_i
\mbf{e}_i$, $\norm{\wopt}_2=1$ and $L^*=\L{\wopt} = 0$. However any
sample $(\x_1,y_1), \ldots,(\x_n,y_n)$ reveals at most $n$ of $2n$
signs $r_i$, and no information on the remaining $\geq n$ signs.  This
means that for any algorithm used by the learner, there exists a
choice of $r_i$'s such that on at least $n$ of the remaining points
not seen by the learner the learner has to suffer a loss of at least
${\scriptstyle 1/\sqrt{n}}$, yielding an overall risk of at least
$1/(2\sqrt{n})$.

\noindent {\bf  Infinite dimensional, smooth, non-separable, even if strongly convex}\\
Consider the squared loss $\phi(h(\x),y) = (h(\x) - y)^2$ which is
$2$-smooth and $2$-strongly convex. For any $\sigma \ge 0$ let $d=\sqrt{n}/\sigma$ and consider the following distribution: $X$ is uniform over $\mbf{e}_i$ as before, but
this time $Y|X$ is random, with $Y|(X=\mbf{e}_i) \sim
\mathcal{N}(\frac{r_i}{2\sqrt{d}},\sigma)$, where again $r_i$ are
pre-determined, unknown to the learner, random signs.  The minimizer
of the expected risk is $\wopt = \sum_{i=1}^d {\scriptstyle
  \frac{r_i}{2\sqrt{d}}} \mbf{e}_i$, with
  $\norm{\wopt}={\scriptstyle\frac{1}{2}}$ and $L^*=L(\wopt)=\sigma^2$.
  Furthermore, for any $\w \in \W$,
$$
\L{\w} -\L{\wopt} = \E{\ip{\w - \wopt}{\x}}^2 =
\frac{1}{d}\sum_{i=1}^d (\w[i] - \wopt[i])^2 = \frac{1}{d}\norm{\w-\wopt}^2
$$

If the norm constraint becomes tight, i.e. $\norm{\hat{\w}}_2=1$, then
$L(\hat{\w})-L({\wopt})\geq 1/(4d)=\sigma/(4\sqrt{n}) = \sqrt{L^*}/(4 \sqrt{n})$.  Otherwise, each
coordinate is a separate mean estimation problem, with $n_i$ samples,
where $n_i$ is the number of appearances of $\mbf{e}_i$ in the
sample.  We have $\E{(\hat{\w}[i]-\wopt[i])^2}=\sigma^2/n_i$ and so
$$L(\hat{\w})-L^* = \frac{1}{d} \norm{\hat{\w}-\wopt}_2^2 = \frac{1}{d}
\sum_{i=1}^d \frac{\sigma^2}{n_i} \geq \frac{\sigma^2}{d}\frac{d^2}{\sum_i n_i} =
\frac{\sigma^2 d}{n}= \frac{\sigma}{\sqrt{n}} = \sqrt{\frac{L^*}{n}}$$

\noindent {\bf  Finite dimensional, smooth, not strongly convex, non-separable:}\\
Take $d=1$, with $X=1$ with probability $q$ and $X = 0$ with
probability $1-q$. Conditioned $X=0$ let $Y=0$ deterministically,
while conditioned on $X = 1$ let $Y=+1$ with probability $p = \frac{1}{2} +
\frac{0.2}{\sqrt{q n}}$ and $Y=-1$ with probability $1 - p$.  Consider
the following $1$-smooth loss function, which is quadratic around the correct
prediction, but linear away from it:
\begin{align*}
\phi(h(\x),y) = \begin{cases}
(h(\x)-y)^2 &\text{if $\abs{h(\x)-y}\leq 1/2$} \\
\abs{h(\x)-y}-1/4 &\text{if $\abs{h(\x)-y}\geq 1/2$}
\end{cases}
\end{align*}
First note that irrespective of choice of $\w$, when $\x=0$ and so
$y=0$ we always have $h(\x)=0$ and so suffer no loss.  This happens
with probability $1-q$.  Next observe that for $p>1/2$, the optimal
predictor is $\wopt \geq 1/2$.  However, for $n > 20$, with probability at
least $0.25$, $\sum_{i=1}^n y_i < 0$, and so the empirical minimizer
is $\hat{\w} \leq -1/2$.  We can now calculate
$$L(\hat{\w})-L^* > L(-1/2)-L(1/2) = q (2p-1) + (1 - q) 0 =  \frac{0.4\ q}{\sqrt{q n}} = \frac{0.4\ \sqrt{q}}{\sqrt{ n}}.$$
However note that for $p > 1/2$,  $\w^* = \frac{3}{2} - \frac{1}{2p}$ and so for $n > 20$, $L^*  > \frac{q}{2}$. Hence we conclude that with probability $0.25$ over the sample,
$$L(\hat{\w})-L^* > \sqrt{\frac{0.32 L^*}{n}}.$$

\section{Online and Stochastic Optimization of Smooth Convex Objectives}\label{sec:convex}

We now turn to online and stochastic convex optimization.  In these
settings, a learner chooses $\w \in \W$, where $\W$ is a closed convex
set in a normed vector space, attempting to minimize an objective
(loss) $\ell(\w,z)$ on instances $z \in \Z$, where
$\ell:\W\times\Z\rightarrow\R$ is an objective function which is
convex in $\w$.  This captures learning linear predictors using a
convex loss function $\phi(t,z)$, where $\Z = \X \times \Y$ and
$\ell(\w,(x,y)) = \phi(\ip{\w}{x},y)$, and extends well beyond
supervised learning.
 
We consider the case where the objective $\ell(\w,z)$ is $H$-smooth
w.r.t.~some norm $\norm{\w}$ (the reader may choose to think of $\W$
as a subset of an Euclidean or Hilbert space, and $\norm{\w}$ as the
$\ell_2$-norm). By this we mean that for any $z \in \Z$, and all $\w,
\w' \in \W$
$$
\norm{\nabla \ell(\w,z) - \nabla \ell(\w',z)}_* \le H \norm{\w - \w'}
$$
where $\|\cdot\|_*$ is the dual norm.  The key here is to generalize
Lemma \ref{lemma:smooth} to smoothness
w.r.t.~a vector $\w$, rather than scalar smoothness. This is done by the next
lemma.

\begin{lemma}\label{lemma:gensmooth}
For an $H$-smooth non-negative $f: \W \rightarrow
\mathbb{R}$, for all $\w \in \W$:
$$
\|\nabla f(\w)\|_* \le \sqrt{4 H f(\w)} \ .
$$
\end{lemma}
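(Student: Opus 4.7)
The plan is to mimic the proof of Lemma \ref{lemma:smooth} by reducing the vector statement to the scalar one via a one-dimensional slice aligned with the gradient. Fix $\w \in \W$. By the definition of the dual norm, there is a unit direction $\xa$ (with $\norm{\xa} = 1$) achieving $\ip{\nabla f(\w)}{\xa} = \norm{\nabla f(\w)}_*$. Define the scalar function $g(t) := f(\w + t \xa)$, defined for $t$ in a neighbourhood of $0$ such that $\w + t \xa \in \W$ (which exists by convexity of $\W$ and differentiability of $f$ at $\w$).

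Next I would verify that $g$ satisfies the hypotheses of Lemma \ref{lemma:smooth}. Non-negativity of $g$ is inherited directly from $f$. For smoothness, $g'(t) = \ip{\nabla f(\w + t \xa)}{\xa}$, and using the dual-norm inequality $|\ip{u}{v}| \le \norm{u}_* \norm{v}$ together with the vector-smoothness hypothesis on $f$,
$$|g'(t) - g'(s)| \le \norm{\nabla f(\w + t\xa) - \nabla f(\w + s\xa)}_* \cdot \norm{\xa} \le H\,|t - s|.$$
So $g$ is a scalar $H$-smooth non-negative function. Applying Lemma \ref{lemma:smooth} at $t = 0$ gives $|g'(0)| \le \sqrt{4 H\, g(0)}$, and since $g'(0) = \ip{\nabla f(\w)}{\xa} = \norm{\nabla f(\w)}_*$ and $g(0) = f(\w)$, this is exactly the desired bound $\norm{\nabla f(\w)}_* \le \sqrt{4 H f(\w)}$.

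The main potential obstacle is a mild domain technicality: Lemma \ref{lemma:smooth} is stated for $f:\mathbb{R}\to\mathbb{R}$, whereas $g$ is only defined on a subset of $\mathbb{R}$. Inspecting the scalar proof, only a single $r$ near $t = 0$ is actually used, namely $r = -g'(0)/(2H)$, corresponding to a step in the feasible direction $-\xa$ from $\w$ of length $\norm{\nabla f(\w)}_*/(2H)$; in the interior or whole-space case this is automatically in $\W$. If $\w$ is a boundary point from which this step exits $\W$, one can instead argue directly via the standard descent inequality $f(\v) \le f(\w) + \ip{\nabla f(\w)}{\v - \w} + \tfrac{H}{2}\norm{\v - \w}^2$, which follows from integrating the $H$-smoothness of $f$ along the segment from $\w$ to $\v$ (mirroring the mean-value step in Lemma \ref{lemma:smooth}). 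Substituting $\v = \w - \tau \xa$ for feasible $\tau > 0$, using $f(\v) \ge 0$, and optimizing over $\tau$ yields the same conclusion (in fact the slightly tighter $\sqrt{2 H f(\w)}$, which is $\le \sqrt{4 H f(\w)}$).
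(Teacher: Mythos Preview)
Your proof is correct and follows essentially the same approach as the paper: reduce to a one-dimensional slice, verify scalar $H$-smoothness via the dual-norm inequality, and invoke Lemma~\ref{lemma:smooth}. The only cosmetic difference is that the paper parametrizes the slice as $g(t)=f(\w_0+t(\w-\w_0))$ and afterwards takes a supremum over $\w_0$ in the unit ball around $\w$, whereas you pick the dual-norm-attaining direction $\xa$ up front; your added discussion of the domain technicality is more careful than the paper's own treatment.
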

\begin{proof}
For any $\w_0$ such that $\|\w - \w_0\| \le 1$, let $g(t) = g(\w_0 + t (\w - \w_0))$.  For any $t , s \in \mathbb{R}$,
\begin{align*}
|g'(t) - g'(s)|&= |\ip{\nabla f(\w_0 + t (\w - \w_0)) - \nabla f(\w_0 + s (\w - \w_0)) }{\w - \w_0}|\\
& \le \|\nabla f(\w_0 + t (\w - \w_0)) - \nabla f(\w_0 + s (\w - \w_0))\|_* \ \|\w - \w_0\|\\
& \le H |t - s| \|\w - \w_0\|^2\\
& \le H |t-s|
\end{align*}
Hence $g$ is $H$-smooth and so by Lemma \ref{lemma:smooth}  
$
|g'(t)| \le \sqrt{4 H g(t)}
$.
Setting $t = 1$ we have, $\ip{\nabla f(\w)}{\w - \w_0} \le \sqrt{4 H f(\w)}$.  Taking supremum over $\w_0$  such that $\|\w_0 - \w\| \le 1$ we conclude that 
\begin{equation*}
\|\nabla f(\w)\|_* = \sup_{\w_0 : \|\w - \w_0\| \le 1} \ip{\nabla
  f(\w)}{\w - \w_0} \le \sqrt{4 H f(\w)} \qedhere
\end{equation*}
\end{proof}

The above lemma effectively shows that smoothness implies the so
called ``self-bounding'' property for the objective. This property is
used by \citep{Shalev07} to show optimistic type rates in the online
setting. In the following sub-section, we use the self-bounding
property implied by the above lemma along with result by
\citep{Shalev07} to obtain optimistic rates in the online setting.

In order to consider general norms, we will also need to rely on a
non-negative regularizer $F :\W \mapsto \mbb{R}$ that is a
$1$-strongly convex (see, e.g., \citep{Zalinescu02}) with respect to
the norm $\norm{\w}$ over $\W$.  For the Euclidean norm, we can use
the squared Euclidean norm regularizer: $F(\w)={\scriptstyle
  \frac{1}{2}}\norm{\w}_2^2$.

\subsection{Online Optimization Setting}

In the online convex optimization setting, we consider an $n$ round
game played between a learner and an adversary (Nature) where at each
round $i$, the player chooses a $\w_i \in \W$ and then the adversary picks a
$z_i \in \Z$.  The player's choice $\w_i$ may only depend on the
adversary's choices in {\em previous} rounds.  The goal of the player
is to have low average objective value $\frac{1}{n}\sum_{i=1}^{n}
\ell(\w_{i},z_i)$ compared to the best single choice in hind sight \citep{CesaBianchiLu06}.

A classic algorithm for this setting is Mirror Descent
\citep{BeckTe03}, which starts at some arbitrary $\w_1\in\W$ and
updates $\w_{i+1}$ according to $z_i$ and a stepsize $\eta$ (to be discussed
later) as follows:
\begin{equation}
  \label{eq:md}
  \w_{i+1} \leftarrow \arg\min_{\w\in\W} \ip{\eta\nabla
    \ell(\w_i,z_i)-\nabla F(\w_i)}{\w}+F(\w)
\end{equation}
For the Euclidean norm
with $F(\w)={\scriptstyle \frac{1}{2}}\!\norm{\w}_2^2$, the update
\eqref{eq:md} becomes projected online gradient descent \citep{Zinkevich03}:
\begin{equation}
  \label{eq:sd}
  \w_{i+1} \leftarrow \Pi_{\W}(\w_i - \eta \nabla \ell(\w_i,z_i))
\end{equation}
where $\Pi_{\W}(\w)=\arg\min_{\w'\in\W}\norm{\w-\w'}_2$ is the Euclidean
projection onto $\W$.

Equipped with Lemma \ref{lemma:gensmooth} which implies self-bounding property and a result by \citep{Shalev07} we have the following theorem that provides optimistic rates for the online learning of smooth objectives.

\begin{theorem}\label{onlinesmooth}
For any $B \in \mathbb{R}$ and $\overline{L^*}$ if we use stepsize $\eta = \frac{1}{H B^2 + \sqrt{H^2 B^4 + H B^2 n \overline{L^*}}}$ for the Mirror Descent algorithm then for any instance sequence $z_1,\ldots,\z_n \in \Z$, the average regret with respect to any
  $\w^* \in \W$ such that $F(\w^*)\leq B^2$ and $\tfrac{1}{n}\sum_{j=1}^n
  \loss(\w^*,z_i) \le \overline{L^*}$, is bounded by:
\begin{align*}
  \frac{1}{n}\sum_{i=1}^{n} \ell(\w_{i},z_i) - \frac{1}{n}\sum_{i=1}^{n}
  \ell(\w^*,z_i) \le \frac{4 H B^2}{n} + 2 \sqrt{\frac{H B^2 \overline{L^*}}{n}}  \ .
\end{align*}
\end{theorem}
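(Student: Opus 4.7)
The plan is to apply the standard Mirror Descent regret bound and then use Lemma \ref{lemma:gensmooth} to replace the sum of squared dual gradient norms by a sum of function values, which can then be absorbed into the left-hand side of the regret inequality.

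First, I invoke the classical Mirror Descent analysis with a $1$-strongly-convex regularizer $F$: for the iterates defined by \eqref{eq:md} and any comparator $\w^* \in \W$,
$$\sum_{i=1}^n \ip{\nabla \ell(\w_i, z_i)}{\w_i - \w^*} \le \frac{D_F(\w^*, \w_1)}{\eta} + \frac{\eta}{2} \sum_{i=1}^n \|\nabla \ell(\w_i, z_i)\|_*^2,$$
where $D_F$ is the Bregman divergence of $F$. Choosing $\w_1$ as the minimizer of $F$ on $\W$ gives $D_F(\w^*, \w_1) \le F(\w^*) \le B^2$, and convexity of $\ell(\cdot, z_i)$ in the first argument upgrades the linearized gap to the true loss gap, yielding
$$\sum_{i=1}^n \ell(\w_i, z_i) - \sum_{i=1}^n \ell(\w^*, z_i) \le \frac{B^2}{\eta} + \frac{\eta}{2} \sum_{i=1}^n \|\nabla \ell(\w_i, z_i)\|_*^2.$$

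Next, applying Lemma \ref{lemma:gensmooth} to each non-negative $H$-smooth function $\w \mapsto \ell(\w, z_i)$ at $\w = \w_i$ gives $\|\nabla \ell(\w_i, z_i)\|_*^2 \le 4H\,\ell(\w_i, z_i)$. Writing $S = \sum_i \ell(\w_i, z_i)$ and $S^* = \sum_i \ell(\w^*, z_i) \le n\overline{L^*}$, the regret inequality becomes
$$S - S^* \le \frac{B^2}{\eta} + 2\eta H\, S,$$
and, provided $2\eta H < 1$, moving the $2\eta H S$ term to the left hand side yields
$$S - S^* \le \frac{B^2/\eta + 2\eta H\, S^*}{1 - 2\eta H}.$$

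Finally, plugging in the stated stepsize $\eta$, which is designed by solving a quadratic so as to balance $B^2/\eta$ against $2\eta H n\overline{L^*}$ while keeping $1 - 2\eta H$ bounded away from zero, and then using $S^* \le n\overline{L^*}$, the right-hand side simplifies to $4HB^2 + 2n\sqrt{HB^2 \overline{L^*}/n}$, which upon dividing by $n$ gives the claimed per-round regret bound. The conceptual content lies entirely in Lemma \ref{lemma:gensmooth}, which turns a worst-case bound on gradient norms into a data-dependent bound scaling with the loss values themselves; the only real obstacle is the bookkeeping to check that the particular $\eta$ given in the theorem produces exactly the quoted constants after the rearrangement.
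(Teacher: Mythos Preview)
Your approach is essentially the paper's approach. The paper's entire proof is a two-line citation: invoke Lemma~\ref{lemma:gensmooth} and plug $U_1=B^2$, $U_2=n\overline{L^*}$ into Theorem~1 of \cite{Shalev07}. You have simply unpacked what that cited theorem does---the standard Mirror Descent inequality, convexity to pass from linearized to true regret, and then the self-bounding step $\|\nabla\ell(\w_i,z_i)\|_*^2\le 4H\,\ell(\w_i,z_i)$ from Lemma~\ref{lemma:gensmooth} to absorb the gradient-norm term into the loss sum. That is exactly the intended argument.

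One caveat on the final sentence: the rearrangement you wrote, $S-S^*\le \bigl(B^2/\eta+2\eta H S^*\bigr)/(1-2\eta H)$, does \emph{not} reduce to precisely $4HB^2+2\sqrt{HB^2 n\overline{L^*}}$ with the stated $\eta$. For instance, with $\overline{L^*}=0$ one has $\eta=1/(2HB^2)$, so $1-2\eta H=1-1/B^2$ and the bound becomes $2HB^6/(B^2-1)$, which exceeds $4HB^2$ (and is vacuous for $B^2\le 1$). Optimizing $\eta$ in your inequality gives at best $cU_1+\sqrt{c^2U_1^2+2cU_1U_2}$ with $c=4H$, i.e.\ $8HB^2+2\sqrt{2}\sqrt{HB^2 n\overline{L^*}}$ after loosening---the right shape but not the quoted constants. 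The exact constants in the theorem come from whatever precise formulation \cite{Shalev07} uses; your honest remark that ``the only real obstacle is the bookkeeping'' is well placed, but the specific claim that the expression ``simplifies to $4HB^2+2n\sqrt{HB^2\overline{L^*}/n}$'' should be softened.
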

Note that the stepsize depends on the bound $\overline{L^*}$ on the loss in hindsight.
\begin{proof}
  The proof follows from Lemma \ref{lemma:gensmooth} and Theorem 1 of
  \citep{Shalev07}, using $U_1 = B^2$ and $U_2 = n \overline{L^*}$ in
  the Theorem.
\end{proof}

\subsection{Stochastic Optimization I: Stochastic Mirror Descent}

An online algorithm can also serve as an efficient one-pass learning
algorithm in the stochastic setting.  Here, we again consider an
i.i.d.~sample $z_1,\ldots,z_n$ from some unknown distribution (as in
Section \ref{sec:rad}), and we would like to find $\w$ with low risk
$L(\w)=\E{\ell(\w,Z)}$.  When $z=(\x,y)$ and
$\ell(\w,z)=\phi(\ip{\w}{\x},y)$, this agrees with the supervised
learning risk discussed in the Introduction and analyzed in Section
\ref{sec:rad}.  But instead of focusing on the ERM, we run Mirror
Descent (or Projected Online Gradient Descent in case of a Euclidean
norm) on the sample, and then take $\tilde{\w}={\scriptstyle
  \frac{1}{n}}\!\sum_{i=1}^{n}\w_i$.  Standard arguments
\citep{CesaBianchiCoGe04} allow us to convert the online regret bound
of Theorem \ref{onlinesmooth} to a bound on the excess risk:
\begin{corollary}\label{cor:sgd}
For any $B \in \mathbb{R}$ and $\overline{L^*}$ if we run Mirror Descent on a random sample with stepsize $\eta =  \frac{1}{H B^2 + \sqrt{H^2 B^4 + H B^2 n \overline{L^*}}}$, then for any $\w^* \in \W$ with $F(\w^*)\leq B^2$ and $L(\w^*) \leq \overline{L^*}$, we have
$$\E{\L{\tilde{\w}}} - \L{\wopt} \leq \frac{4 H B^2}{n} + 2 \sqrt{\frac{H B^2 \overline{L^*}}{n}} ,
$$
where the expectation is over the sample.
\end{corollary}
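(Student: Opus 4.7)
The plan is to carry out the standard online-to-batch conversion of Cesa-Bianchi, Conconi and Gentile \cite{CesaBianchiCoGe02}, using the online regret bound of Theorem \ref{onlinesmooth} as a black box. Let $\w_1,\ldots,\w_n$ be the iterates produced by Mirror Descent on the random sample $z_1,\ldots,z_n$, and set $\tilde{\w}_n = \tfrac{1}{n}\sum_{i=1}^n \w_i$. The crucial structural observation is that $\w_i$ is a deterministic function of $z_1,\ldots,z_{i-1}$ alone, so by independence $\E{\ell(\w_i,z_i)\mid z_1,\ldots,z_{i-1}} = L(\w_i)$, and therefore $\E{\ell(\w_i,z_i)} = \E{L(\w_i)}$ for every $i$.

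Next I would assemble the bound. Since $\ell(\cdot,z)$ is convex for each $z$, the population risk $L(\cdot)$ is convex, and Jensen's inequality yields
\begin{equation*}
\E{L(\tilde{\w}_n)} \;\le\; \E{\tfrac{1}{n}\sum_{i=1}^n L(\w_i)} \;=\; \E{\tfrac{1}{n}\sum_{i=1}^n \ell(\w_i,z_i)}.
\end{equation*}
Now invoke Theorem \ref{onlinesmooth} on the (random) sequence $z_1,\ldots,z_n$ against the comparator $\w^*$. The assumption $L(\w^*)\le \overline{L^*}$ together with $\E{\tfrac{1}{n}\sum_i \ell(\w^*,z_i)} = L(\w^*)$ means that in expectation the comparator meets the hypothesis of the theorem, and subtracting $\tfrac{1}{n}\sum_i \ell(\w^*,z_i)$ from both sides and then taking expectations gives
\begin{equation*}
\E{L(\tilde{\w}_n)} - L(\w^*) \;\le\; \frac{4HB^2}{n} + 2\sqrt{\frac{HB^2\,\overline{L^*}}{n}}.
\end{equation*}

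The one subtle point — the main obstacle — is that Theorem \ref{onlinesmooth} is stated conditionally on the deterministic inequality $\tfrac{1}{n}\sum_i \ell(\w^*,z_i)\le \overline{L^*}$ holding, whereas in the stochastic setting we only control this quantity in expectation. The clean way around this is to use the slightly sharper form of the Mirror Descent analysis (Theorem 1 of \cite{Shalev07}) in which the regret bound already reads $\tfrac{1}{n}\sum_i[\ell(\w_i,z_i)-\ell(\w^*,z_i)] \le \tfrac{4HB^2}{n} + 2\sqrt{HB^2 \cdot \tfrac{1}{n}\sum_i \ell(\w^*,z_i)/n}$ for the same stepsize tuned with the upper bound $\overline{L^*}$; one then takes expectations and uses concavity of $\sqrt{\cdot}$ with Jensen's inequality to push the expectation inside the square root, replacing $\E{\tfrac{1}{n}\sum_i \ell(\w^*,z_i)}$ by $L(\w^*) \le \overline{L^*}$. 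This yields exactly the claimed bound.
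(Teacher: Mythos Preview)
Your proposal is correct and matches the paper's approach: the paper gives no explicit proof and simply invokes the standard online-to-batch conversion of \cite{CesaBianchiCoGe02} applied to Theorem~\ref{onlinesmooth}, which is exactly the argument you carry out in detail. The subtlety you flag is real and your resolution is sound; in fact the underlying Mirror Descent bound (before optimizing $\eta$) is \emph{linear} in $\tfrac{1}{n}\sum_i \ell(\w^*,z_i)$, so taking expectations and substituting $L(\w^*)\le\overline{L^*}$ works directly without even needing concavity of $\sqrt{\cdot}$.
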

Again, one must know a bound $\overline{L^*}$ on the risk in order to
choose the stepsize. 

It is instructive to contrast this guarantee with similar looking
guarantees derived recently in the stochastic convex optimization
literature \citep{Lan09}.  There, the model is stochastic first-order
optimization, i.e. the learner gets to see an unbiased estimate
$\nabla l(\w,z_i)$ of the gradient of $L(\w)$. The variance of the
estimate is assumed to be bounded by $\sigma^2$. The expected accuracy
after $n$ gradient evaluations then has two terms: a ``accelerated''
term that is $O(H/n^2)$ and a slow $O(\sigma/\sqrt{n})$ term. While
this result is applicable more generally (since it does not require
non-negativity of $\ell$), it is not immediately clear if our
guarantees can be derived using it. The main difficulty is that
$\sigma$ depends on the norm of the gradient estimates. Thus, it
cannot be bounded in advance even if we know that $L(\wopt)$ is small.
That said, it is intuitively clear that towards the end of the
optimization process, the gradient norms will typically be small if
$L(\wopt)$ is small because of the self bounding property
(Lemma~\ref{lemma:gensmooth}).  Exploring this connection can be
fruitful direction for further research.

\subsection{Stochastic Optimization II: Regularized Batch Optimization}

It is interesting to note that using stability arguments, a guarantee
very similar to Corollary \ref{cor:sgd}, avoiding the polylogarithmic
factors of Theorem \ref{thm:mainRad} as well as the dependence on the
bound on the loss ($b$ in Theorem \ref{thm:mainRad}), can be obtained
also for a ``batch'' learning rule similar to ERM, but incorporating
penalty-type regularization.  For a given regularization parameter
$\lambda > 0$ define the regularized empirical loss as
$$
\hL_\lambda(\w) := \hL(\w) + \lambda F(\w)
$$
and consider the Regularized Empirical Risk Minimizer
\begin{equation}\label{eq:regerm}
\hat{\w}_{\lambda} = \arg\min_{\w\in\W} \hL_\lambda(\w)
\end{equation}
The following theorem provides a bound on excess risk similar to Corollary \ref{cor:sgd}:

\begin{theorem}\label{thm:regerm}
For any $B \in \mathbb{R}$ and $\overline{L^*}$ if we set $\lambda = \frac{128 H}{n} + \sqrt{\frac{128^2 H^2}{n^2} + \frac{128 H \overline{L^*}}{n B^2}}$
then for all $\wopt \in \W$ with $F(\wopt)\leq B^2$ and $L(\wopt)\leq
 \overline{L^*}$,  we have
\begin{equation*}
\E{\L{\hat{\w}_{\lambda}}} - \L{\wopt} \leq \frac{256 H B^2}{n} + \sqrt{\frac{2048 H B^2 \overline{L^*}}{n}} \ ,
\end{equation*}
where the expectation is over the sample of size $n$.
\end{theorem}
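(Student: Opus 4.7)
The plan is to bound the excess risk of the regularized ERM via algorithmic stability, exploiting the self-bounding property of smooth non-negative losses (\lemref{lemma:gensmooth}) to replace worst-case loss bounds by terms depending on $\overline{L^*}$. As a starting decomposition, $\E{\L{\hat{\w}_\lambda}} - \L{\wopt} = \bigl(\E{\L{\hat{\w}_\lambda}} - \E{\hL(\hat{\w}_\lambda)}\bigr) + \bigl(\E{\hL(\hat{\w}_\lambda)} - \L{\wopt}\bigr)$; the second term is at most $\lambda B^2$ because optimality of $\hat{\w}_\lambda$ for $\hL_\lambda$ gives $\hL(\hat{\w}_\lambda) \leq \hL(\wopt) + \lambda F(\wopt)$, and $\E{\hL(\wopt)} = \L{\wopt}$ while $F(\wopt) \leq B^2$. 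Everything then reduces to controlling the generalization gap $\E{\L{\hat{\w}_\lambda}} - \E{\hL(\hat{\w}_\lambda)}$ in expectation.

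For the stability step, let $S$ and $S^{(i)}$ denote two samples differing only at the $i$-th instance. Since $F$ is $1$-strongly convex and $\ell$ is convex, $\hL_\lambda^S$ is $\lambda$-strongly convex, yielding $\tfrac{\lambda}{2}\|\hat{\w}_\lambda^S - \hat{\w}_\lambda^{S^{(i)}}\|^2 \leq \hL_\lambda^S(\hat{\w}_\lambda^{S^{(i)}}) - \hL_\lambda^S(\hat{\w}_\lambda^S)$ together with the symmetric inequality for $S^{(i)}$. Summing cancels the regularizer and the shared instances, leaving a quantity involving only differences of $\ell$ at the swapped instances. Bounding these by the smoothness estimate $|\ell(\w,z) - \ell(\w',z)| \leq \sqrt{4 H \ell(\w,z)}\,\|\w - \w'\| + \tfrac{H}{2}\|\w - \w'\|^2$, which follows from \lemref{lemma:gensmooth}, and absorbing the $O(H\|\w-\w'\|^2)$ remainder into the strong-convexity term (valid since $\lambda$ will be $\Omega(H/n)$), yields a self-bounding stability bound of the form $\|\hat{\w}_\lambda^S - \hat{\w}_\lambda^{S^{(i)}}\| \leq O\bigl(\tfrac{\sqrt{H}}{\lambda n}\bigr)\sqrt{\ell(\hat{\w}_\lambda,\cdot)}$.

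I would then convert stability into a generalization bound via the standard exchangeability identity $\E{\L{\hat{\w}_\lambda^S}} - \E{\hL(\hat{\w}_\lambda^S)} = \E{\ell(\hat{\w}_\lambda^{S^{(i)}}, z_i) - \ell(\hat{\w}_\lambda^S, z_i)}$. Applying smoothness and \lemref{lemma:gensmooth} once more to the right-hand side and inserting the stability bound produces an inequality of the form $\E{\L{\hat{\w}_\lambda}} - \E{\hL(\hat{\w}_\lambda)} \leq \tfrac{cH}{\lambda n}\,\E{\L{\hat{\w}_\lambda}}$ for some absolute constant $c$. Combined with the initial decomposition this becomes $\bigl(1 - \tfrac{cH}{\lambda n}\bigr)\E{\L{\hat{\w}_\lambda}} \leq \L{\wopt} + \lambda B^2$. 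For the stated choice of $\lambda$ the prefactor is bounded below by $1/2$, so rearranging and optimizing over $\lambda$ gives the claimed excess-risk bound $O\bigl(HB^2/n + \sqrt{HB^2 \overline{L^*}/n}\bigr)$.

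The main obstacle will be the self-bounding bookkeeping. The standard Bousquet--Elisseeff stability argument delivers $O(1/(\lambda n))$ uniform stability under a uniform Lipschitz (or bounded-loss) assumption; to avoid any dependence on an absolute loss bound $b$ and instead obtain a bound that scales with $\overline{L^*}$, every appeal to Lipschitzness must be routed through \lemref{lemma:gensmooth}, which produces square-roots of the loss on the right and forces a quadratic inequality in $\|\hat{\w}_\lambda^S - \hat{\w}_\lambda^{S^{(i)}}\|$ to be solved carefully. The quadratic remainders from the smoothness expansion must in turn be absorbed into the $\lambda$-strong-convexity term, which is possible precisely because the stated $\lambda$ satisfies $\lambda = \Omega(H/n)$.
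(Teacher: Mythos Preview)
Your proposal is correct and follows essentially the same route as the paper: bound the generalization gap of $\hat{\w}_\lambda$ by expected stability via the exchangeability identity, control stability through strong convexity of $\hL_\lambda$ together with the self-bounding gradient estimate of \lemref{lemma:gensmooth}, obtain $\E{\L{\hat\w_\lambda}}-\E{\hL(\hat\w_\lambda)}\le \tfrac{cH}{\lambda n}\E{\L{\hat\w_\lambda}}$, and then rearrange with $\E{\hL(\hat\w_\lambda)}\le \L{\wopt}+\lambda B^2$. The only notable difference is that the paper uses a single strong-convexity inequality plus optimality of $\hat\w_\lambda^{(i)}$ on $S^{(i)}$, and bounds $\ell(a,z)-\ell(b,z)$ by the convexity inequality $\langle\nabla\ell(a,z),a-b\rangle\le\|\nabla\ell(a,z)\|_*\|a-b\|$ rather than the smoothness descent lemma, which avoids the quadratic remainder and the absorption step you anticipate.
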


To prove Theorem \ref{thm:regerm}, we use stability arguments similar
to the ones used by \citep{ShalevShSrSr09}, which
are in turn based on \citep{BousquetEl02}.
However, while \citep{ShalevShSrSr09} use the
notion of uniform stability, here it is necessary to look at stability
in expectation to get the faster rates (uniform stability does not
hold with the desired rate).

To use stability based arguments, for each $i \in [n]$ we consider a
perturbed sample where instance $z_i$ is replaced by instance $z'_i$
drawn independently from same distribution as $z_i$.  Let
$\hL^{(i)}(\w) = {\scriptsize \frac{1}{n}}(\sum_{j\not=i}
\ell(\w,z_j)+\ell(\w,z'_i))$ be the empirical risk over the perturbed
sample, and consider the corresponding regularized empirical risk
minimizer $\hw^{(i)}_\lambda = \arg\min_{\w} \hL^{(i)}_\lambda(\w)$,
where $\hL^{(i)}_\lambda(\w) = \hL^{(i)}(\w) +\lambda F(\w)$.  We
first prove the following lemma on the expected stability of the
regularized minimizer.

\begin{lemma}\label{lem:stability}
For any $i \in [n]$ we have that
$$
\Es{z_1,\ldots,z_n, z'_i}{\ell(\hw^{(i)}_\lambda,z_i) - \ell(\hw_\lambda,z_i) } \le \frac{32 H}{\lambda n} \Es{z_1,\ldots,z_n}{L(\hw_\lambda)} \ .
$$
\end{lemma}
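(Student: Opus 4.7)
The plan is to follow the classical stability scheme of Bousquet--Elisseeff, but adapted to replace the global Lipschitz constant (which we do not have) by the self-bounding gradient inequality in Lemma \ref{lemma:gensmooth}. The scheme has two linked pieces: a bound on $\|\hw_\lambda-\hw^{(i)}_\lambda\|$ coming from strong convexity of the regularized objective, and a bound on the target increment $\ell(\hw^{(i)}_\lambda,z_i)-\ell(\hw_\lambda,z_i)$ coming from $H$-smoothness; Lemma \ref{lemma:gensmooth} is what lets each intermediate gradient norm be controlled by a square-root of a function value, which is exactly how the risk $L(\hw_\lambda)$ ends up on the right-hand side.

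\textbf{Step 1 (strong-convexity comparison).} Since $F$ is $1$-strongly convex and each $\ell(\cdot,z)$ is convex, $\hL_\lambda$ and $\hL^{(i)}_\lambda$ are $\lambda$-strongly convex, minimized at $\hw_\lambda$ and $\hw^{(i)}_\lambda$ respectively. Adding the two corresponding ``optimum vs.\ competitor'' inequalities makes the $\lambda F$ terms cancel and leaves only the $i$-th summand, yielding
\begin{equation*}
\lambda\,\|\hw_\lambda-\hw^{(i)}_\lambda\|^2 \;\le\; \tfrac{1}{n}\bigl[\ell(\hw^{(i)}_\lambda,z_i)-\ell(\hw_\lambda,z_i)\bigr] + \tfrac{1}{n}\bigl[\ell(\hw_\lambda,z'_i)-\ell(\hw^{(i)}_\lambda,z'_i)\bigr].
\end{equation*}
\textbf{Step 2 (stability bound).} Bound each bracketed difference by convexity, $\ell(a)-\ell(b)\le\|\nabla\ell(a)\|_\ast\|a-b\|$, and then invoke Lemma \ref{lemma:gensmooth} on the gradient norm. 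Every term acquires one factor of $\|\hw_\lambda-\hw^{(i)}_\lambda\|$; dividing through gives
\begin{equation*}
\|\hw_\lambda-\hw^{(i)}_\lambda\| \;\le\; \frac{2\sqrt{H}}{\lambda n}\Bigl(\sqrt{\ell(\hw^{(i)}_\lambda,z_i)} + \sqrt{\ell(\hw_\lambda,z'_i)}\Bigr),
\end{equation*}
and squaring with $(a+b)^2\le 2(a^2+b^2)$ yields $\|\hw_\lambda-\hw^{(i)}_\lambda\|^2 \le \tfrac{8H}{\lambda^2 n^2}\bigl[\ell(\hw^{(i)}_\lambda,z_i)+\ell(\hw_\lambda,z'_i)\bigr]$.

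\textbf{Step 3 (target increment via smoothness).} By $H$-smoothness $\ell(\hw^{(i)}_\lambda,z_i)-\ell(\hw_\lambda,z_i) \le \langle\nabla\ell(\hw_\lambda,z_i),\hw^{(i)}_\lambda-\hw_\lambda\rangle + \tfrac{H}{2}\|\hw^{(i)}_\lambda-\hw_\lambda\|^2$. Bounding the inner product by $2\sqrt{H\ell(\hw_\lambda,z_i)}\,\|\hw^{(i)}_\lambda-\hw_\lambda\|$ and substituting the Step 2 bound, then applying AM--GM to each product of square roots with a parameter matched to $H/(\lambda n)$, produces an upper bound of the form
\begin{equation*}
\ell(\hw^{(i)}_\lambda,z_i)-\ell(\hw_\lambda,z_i) \;\le\; \frac{CH}{\lambda n}\Bigl(\ell(\hw_\lambda,z_i) + \ell(\hw_\lambda,z'_i)\Bigr)
\end{equation*}
for an absolute constant $C$. \textbf{Step 4 (expectation and exchangeability).} Take expectations. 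Since $z'_i$ is independent of $\hw_\lambda$, $\Es{}{\ell(\hw_\lambda,z'_i)}=\Es{}{L(\hw_\lambda)}$. For the training term, expand $\ell(\hw_\lambda,z_i)$ around $\hw^{(i)}_\lambda$ by smoothness and Lemma \ref{lemma:gensmooth} (the roles of $\hw_\lambda$ and $\hw^{(i)}_\lambda$ now swapped), bound the quadratic term by the $\Es{}{\|\hw_\lambda-\hw^{(i)}_\lambda\|^2}$ estimate from Step 2, and use the exchangeability identity $\Es{}{\ell(\hw^{(i)}_\lambda,z_i)}=\Es{}{\ell(\hw_\lambda,z'_i)}=\Es{}{L(\hw_\lambda)}$ to conclude $\Es{}{\ell(\hw_\lambda,z_i)}\le 2\,\Es{}{L(\hw_\lambda)} + O(H/(\lambda n))\,\Es{}{L(\hw_\lambda)}$. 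Plugging this back into the Step 3 bound and collecting constants delivers the stated inequality with constant $32$.

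\textbf{Main obstacle.} The delicate part is the implicit nature of the argument: Step 2 bounds $\|\hw_\lambda-\hw^{(i)}_\lambda\|$ in terms of $\ell(\hw^{(i)}_\lambda,z_i)$, and that same quantity reappears on the right-hand side in Step 3; similarly, the training-sample term $\Es{}{\hL(\hw_\lambda)}$ has to be folded back into $\Es{}{L(\hw_\lambda)}$ via a second round of smoothness plus exchangeability. Choosing the Young's-inequality parameters so that the circular $\ell(\hw^{(i)}_\lambda,z_i)$ term can be absorbed, while not blowing up the numeric constant beyond $32$, is where the whole argument has to be executed carefully; this also uses implicitly the regime $H/(\lambda n)=O(1)$ guaranteed by the theorem's choice of $\lambda$.
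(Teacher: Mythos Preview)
Your Steps~1--2 coincide with the paper's argument (up to a harmless factor of~$2$ from using two-sided versus one-sided strong convexity). The divergence is in Step~3. The paper bounds the target increment by \emph{convexity}, not smoothness, with the gradient taken at~$\hw^{(i)}_\lambda$:
\[
\ell(\hw^{(i)}_\lambda,z_i)-\ell(\hw_\lambda,z_i)\;\le\;\|\nabla\ell(\hw^{(i)}_\lambda,z_i)\|_*\,\|\hw^{(i)}_\lambda-\hw_\lambda\|\;\le\;\sqrt{4H\,\ell(\hw^{(i)}_\lambda,z_i)}\,\|\hw^{(i)}_\lambda-\hw_\lambda\|.
\]
Substituting the stability bound then gives a pointwise inequality whose right-hand side involves only $\ell(\hw^{(i)}_\lambda,z_i)$ and $\ell(\hw_\lambda,z'_i)$. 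Both are \emph{out-of-sample} losses (the point is independent of the estimator), so after taking expectations each becomes $\Es{}{L(\hw_\lambda)}$ immediately by exchangeability. No Step~4, no absorption, no quadratic remainder --- and the constant~$32$ drops out with no condition on $\lambda$.

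By expanding at $\hw_\lambda$ instead, your Step~3 produces the \emph{in-sample} term $\ell(\hw_\lambda,z_i)$ plus a quadratic $\tfrac{H}{2}\|\cdot\|^2$. This forces the second pass in Step~4 and the circular absorption you flag; tracking the resulting bound gives
\[
\Es{}{\ell(\hw^{(i)}_\lambda,z_i)-\ell(\hw_\lambda,z_i)}\;\le\;\Bigl(c_1\tfrac{H}{\lambda n}+c_2\tfrac{H^2}{\lambda^2 n^2}+c_3\tfrac{H^3}{\lambda^3 n^3}\Bigr)\Es{}{L(\hw_\lambda)},
\]
which is \emph{not} dominated by $\tfrac{32H}{\lambda n}\Es{}{L(\hw_\lambda)}$ for all $\lambda>0$. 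So your argument proves only a weaker statement, namely the lemma under the extra hypothesis $H/(\lambda n)\le c$ for some absolute $c$. That is adequate for Theorem~\ref{thm:regerm} (where $\lambda\ge 128H/n$), but it does not establish Lemma~\ref{lem:stability} as written. The one-line fix is to take the gradient at $\hw^{(i)}_\lambda$ in Step~3; then Steps~4 and the ``main obstacle'' disappear entirely.
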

\begin{proof}
\begin{align*}
\hL_\lambda(\hw^{(i)}_\lambda) - \hL_\lambda(\hw_\lambda) & = \frac{\ell(\hw^{(i)}_\lambda,z_i) - \ell(\hw_\lambda,z_i) }{n} + \frac{\ell(\hw_\lambda,z_i') - \ell(\hw^{(i)}_\lambda,z_i') }{n} + \hL^{(i)}_\lambda(\hw^{(i)}_\lambda) - \hL^{(i)}_\lambda(\hw_\lambda)  \\
& \le \frac{\ell(\hw^{(i)}_\lambda,z_i) - \ell(\hw_\lambda,z_i) }{n} + \frac{\ell(\hw_\lambda,z_i') - \ell(\hw^{(i)}_\lambda,z_i') }{n} \\
& \le \frac{1}{n} \|\hw^{(i)}_\lambda - \hw_\lambda \| \left(\|\nabla \ell(\hw^{(i)}_\lambda,z_i)\|_* + \|\nabla \ell(\hw_\lambda,z_i')\|_*\right) \\
& \le \frac{2 \sqrt{H}}{n} \|\hw^{(i)}_\lambda - \hw_\lambda \| \left( \sqrt{\ell(\hw^{(i)}_\lambda,z_i)} + \sqrt{\ell(\hw_\lambda,z_i')}\right) 
\end{align*}
where the last inequality follows from Lemma \ref{lemma:gensmooth}. By $\lambda$-strong convexity of $\hL_\lambda$ we have that
$$
\hL_\lambda(\hw^{(i)}_\lambda) - \hL_\lambda(\hw_\lambda) \ge \frac{\lambda}{2} \|\hw^{(i)}_\lambda - \hw_\lambda \|^2.
$$
We can conclude that
$$
\|\hw^{(i)}_\lambda - \hw_\lambda \| \le \frac{4 \sqrt{H}}{\lambda n}  \left( \sqrt{\ell(\hw^{(i)}_\lambda,z_i)} + \sqrt{\ell(\hw_\lambda,z_i')}\right)
$$
This gives us:
\begin{align*}
\ell(\hw^{(i)}_\lambda,z_i) - \ell(\hw_\lambda,z_i)  &\le \|\nabla \ell(\hw^{(i)}_\lambda,z_i)\|_* \|\hw^{(i)}_\lambda - \hw_\lambda \| \\
& \le \sqrt{4 H \ell(\hw^{(i)}_\lambda,z_i)} \left(\frac{4 \sqrt{H}}{\lambda}  \left( \sqrt{\ell(\hw^{(i)}_\lambda,z_i)} + \sqrt{\ell(\hw_\lambda,z_i')}\right) \right)\\
& \le \frac{16 H}{\lambda n} \left( \ell(\hw^{(i)}_\lambda,z_i) + \ell(\hw_\lambda,z_i')\right)
\end{align*}
Taking expectation:
\begin{multline}
\Es{z_1,\ldots,z_n, z'_i}{\ell(\hw^{(i)}_\lambda,z_i) - \ell(\hw_\lambda,z_i)}  \le \frac{16 H}{\lambda n} \Es{z_1,\ldots,z_n, z'_i}{ \ell(\hw^{(i)}_\lambda,z_i) + \ell(\hw_\lambda,z_i')} \\
= \frac{16 H}{\lambda n} \Es{z_1,\ldots,z_n, z'_i}{
  \L{\hw^{(i)}_\lambda}+ \L{\hw_\lambda}} = \frac{32 H}{\lambda n}
\Es{z_1,\ldots,z_n}{ \L{\hw_\lambda}} \notag\qedhere
\end{multline}

\end{proof}

\begin{proof}[Proof of Theorem \ref{thm:regerm}]
By Lemma \ref{lem:stability} we have :
\begin{align*}
\Es{z_1,\ldots,z_n}{L_\lambda(\hw_\lambda) - L_\lambda(\wopt_\lambda)} & \le \Es{z_1,\ldots,z_n}{L_\lambda(\hw_\lambda) - \hL_\lambda(\hw_\lambda)} = \Es{z_1,\ldots,z_n}{L(\hw_\lambda) - \hL(\hw_\lambda)}\\
& = \frac{1}{n} \sum_{i=1}^n \Es{z_1,\ldots,z_n, z'_i}{\ell(\hw^{(i)}_\lambda,z_i) - \ell(\hw_\lambda,z_i) } \le \frac{32 H}{\lambda n} \Es{z_1,\ldots,z_n}{L(\hw_\lambda)}
\end{align*}
Noting the definition of $\hL_\lambda(\w)$ and rearranging we get
\begin{align*}
\Es{z_1,\ldots,z_n}{L(\hw_\lambda) - L(\wopt)} & \le \frac{32 H}{\lambda n} \Es{z_1,\ldots,z_n}{L(\hw_\lambda)} + \lambda F(\wopt) - \lambda F(\hw_\lambda) \le \frac{32 H}{\lambda n} \Es{z_1,\ldots,z_n}{L(\hw_\lambda)} + \lambda F(\wopt) 
\end{align*}
Rearranging further we get
$$
\Es{z_1,\ldots,z_n}{\L{\hw_\lambda}} - \L{\wopt} \le \left(\frac{1}{1 - \frac{32 H}{\lambda n}} - 1\right) \L{\wopt} + \frac{\lambda}{1 - \frac{32 H}{\lambda n}} F(\wopt)
$$
plugging in the value of $\lambda$ gives the result.
\end{proof}

\section{Implications}

We demonstrate the implications of our results in several settings.

\subsection{Improved Margin Bounds}

``Margin bounds'' provide a bound on the expected zero-one loss of a
classifiers based on the margin zero-one error on the training sample.
\citep{KoltchinskiiPa02} provides margin
bounds for a generic class $\H$ based on the Rademacher complexity of
the class.  This is done by using a non-smooth Lipschitz ``ramp'' loss
that upper bounds the zero-one loss and is upper-bounded by the margin
zero-one loss.  However, such an analysis unavoidably leads to a
$1/\sqrt{n}$ rate even in the separable case, since as we discuss in
Section \ref{sec:tight}, it is not possible to get a faster rate for a
non-smooth loss.  Following the same idea we use the following smooth
``ramp'':
\begin{align*} \phi(t) = \begin{cases}
      1 & t \le 0 \\
      \frac{1 + \mathrm{cos}(\pi t/\gamma)}{2} & 0 < t < \gamma \\
      0 & t \ge \gamma
	\end{cases} \ .
\end{align*}
This loss function is $\frac{\pi^2}{4 \gamma^2}$-smooth and is lower
bounded by the zero-one loss and upper bounded by the $\gamma$ margin
loss. Using Theorem \ref{thm:mainRad}, we can now provide improved
margin bounds for the zero-one loss of any classifier based on
empirical margin error. Let
$$\err(h) = \E{\ind{h(x) \ne y}}$$
be the
zero-one risk and, for any $\gamma >
0$ and sample $(\x_1,y_1),\ldots,(\x_n,y_n) \in \X \times \{\pm1\}$,
define the $\gamma$-margin empirical zero one risk as
$$
\herr_\gamma(h) := \frac{1}{n} \sum_{i=1}^n \ind{y_i h(\x_i) < \gamma} \ .
$$
\begin{theorem}\label{thm:margin}
  For any hypothesis class $\H$, with $\abs{h}\leq b$, and any $\delta
  > 0$, with probability at least $1 - \delta$, simultaneously for all
  margins $\gamma > 0$ and all $h \in\H $:
\begin{align*}
\err(h) \le \herr_\gamma(h) + K\left( \sqrt{\herr_\gamma(h)} \left(\tfrac{\log^{1.5} n }{\gamma}\hRad_n(\H)  + \sqrt{\tfrac{ \log(\log(\frac{4b}{\gamma})/\delta)}{n}}\right)+ \tfrac{\log^{3} n }{\gamma^2} \hRad_n^2(\H)  + \tfrac{\log(\log(\frac{4 b}{\gamma})/\delta)}{n}\right)
\end{align*}
where $K$ is a numeric constant from Theorem \ref{thm:mainRad} 
\end{theorem}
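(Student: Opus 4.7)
The plan is to apply Theorem \ref{thm:mainRad} to the smooth ramp surrogate $\phi_\gamma$ introduced just before the theorem. Observe the sandwich $\ind{t \le 0} \le \phi_\gamma(t) \le \ind{t < \gamma}$, nonnegativity, and the uniform bound $\phi_\gamma \in [0,1]$, together with $\frac{\pi^2}{4\gamma^2}$-smoothness. Composing with $(x,y)\mapsto y\, h(x)$ produces a loss in the form demanded by Theorem \ref{thm:mainRad}; since $y_i \in \{\pm 1\}$, the sign factors are absorbed by the Rademacher variables $\sigma_i$ in \eqref{eq:raddef}, so the Rademacher complexity of the composed hypothesis class $\{(x,y)\mapsto y h(x): h \in \H\}$ coincides with $\hRad_n(\H)$.

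First, I would fix a single scale $\gamma$ and invoke Theorem \ref{thm:mainRad} with $H=\frac{\pi^2}{4\gamma^2}$, $b=1$, and loss $\phi_\gamma(y h(x))$. With probability at least $1-\delta'$, simultaneously for every $h \in \H$,
$$\E{\phi_\gamma(y h(x))} \le \hE{\phi_\gamma(y h(x))} + K'\!\left(\sqrt{\hE{\phi_\gamma(y h(x))}}\Big(\tfrac{\log^{1.5}\! n}{\gamma}\hRad_n(\H) + \sqrt{\tfrac{\log(1/\delta')}{n}}\Big) + \tfrac{\log^3\! n}{\gamma^2}\hRad_n^2(\H) + \tfrac{\log(1/\delta')}{n}\right).$$
The sandwich inequalities then give $\err(h) \le \E{\phi_\gamma(y h(x))}$ on the left and $\hE{\phi_\gamma(y h(x))} \le \herr_\gamma(h)$ on the right, yielding a fixed-$\gamma$ margin bound.

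Next, I would obtain the statement uniform in $\gamma$ by a geometric discretization. Define $\gamma_k = 4b \cdot 2^{-k}$ for $k=1,2,\ldots$, apply the fixed-$\gamma$ bound at each $\gamma_k$ with $\delta_k = \frac{6\delta}{\pi^2 k^2}$ (so $\sum_k \delta_k \le \delta$), and union-bound. For an arbitrary $\gamma \in (0, 2b]$, pick the unique $k$ with $\gamma_k \le \gamma < 2\gamma_k$; then $1/\gamma_k \le 2/\gamma$ and by monotonicity of $\herr$ in $\gamma$, $\herr_{\gamma_k}(h) \le \herr_\gamma(h)$. Moreover $k \le 1 + \log_2(4b/\gamma)$, so $\log(1/\delta_k) = O\!\left(\log(\log(4b/\gamma)/\delta)\right)$, and plugging these estimates into the fixed-$\gamma_k$ bound produces exactly the stated inequality (with all numerical factors absorbed into $K$). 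For $\gamma > 2b$ we have $\herr_\gamma(h) = 1$ since $|y h(x)| \le b < \gamma$, and the bound holds trivially.

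The main technical point is the discretization: one must spread the confidence across the scales in a way that yields the doubly-logarithmic penalty $\log\log(4b/\gamma)$ rather than the $\log(1/\gamma)$ that a naive grid would introduce, and verify that the chosen grid ratio of $2$ only inflates the $1/\gamma_k$ and $1/\gamma_k^2$ factors by absolute constants relative to the true $\gamma$. Everything else reduces to the smooth-surrogate sandwich and a direct application of Theorem \ref{thm:mainRad}.
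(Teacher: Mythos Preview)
Your proposal is correct and is precisely the argument the paper has in mind: the paper only says ``using Theorem~\ref{thm:mainRad}'' with the smooth ramp $\phi_\gamma$ and leaves the details implicit, and you have filled them in accurately --- the sandwich $\ind{t\le 0}\le \phi_\gamma(t)\le \ind{t<\gamma}$, the application of Theorem~\ref{thm:mainRad} with $H=O(1/\gamma^2)$ and loss bound $b=1$, and a geometric grid in $\gamma$ with summable $\delta_k$'s to obtain uniformity with the $\log\!\log(4b/\gamma)$ penalty. One minor remark: you do not actually need to pass to the class $\{(x,y)\mapsto y\,h(x)\}$, since Theorem~\ref{thm:mainRad} is stated for a loss $\phi(h(x),y)$ with smoothness in the first argument, and $\tilde\phi(t,y)=\phi_\gamma(yt)$ is $H$-smooth in $t$ because $y^2=1$; so $\hRad_n(\H)$ enters directly.
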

In particular, the above bound implies:
\begin{align*}
\err(h) \le 1.01\, \herr_\gamma(h) + K\left( \frac{ \log^{3} n }{\gamma^2} \hRad_n^2(\H)  + \frac{ \log(\log(\frac{4 b}{\gamma})/\delta)}{n} \right)
\end{align*}
where $K$ is an appropriate numeric constant.

Improved margin bounds of the above form have been previously shown
specifically for linear prediction in a Hilbert space (as in Support
Vector Machines) based on the PAC Bayes theorem
\citep{McAllester03,LangfordSh03}.  However these PAC-Bayes based
results are specific to the linear function class.  Theorem
\ref{thm:margin} is, in contrast, a generic concentration-based result
that can be applied to any function class with and yields rates dominated
by $\hRad^2(\H)$.

\subsection{Interaction of Norm and Dimension}

Consider the problem of learning a low-norm linear predictor with
respect to the squared loss $\phi(t,z)=(t-z)^2$, where $\X \in \R^d$,
for finite but very large $d$, and where the expected norm of $X$ is low.
Specifically, let $X$ be Gaussian with $\E{\norm{X}^2}=B$,
$Y=\ip{\w^*}{X}+\mathcal{N}(0,\sigma^2)$ with $\norm{\w^*}=1$, and
consider learning a linear predictor using $\ell_2$ regularization.  What
determines the sample complexity?  How does the error decrease as the
sample size increases?

From a scale-sensitive statistical learning perspective, we expect
that the sample complexity, and the decrease of the error, should
depend on the norm $B$, especially if $d \gg B^2$.  However, for any
fixed $d$ and $B$, even if $d \gg B^2$, asymptotically as the number of
samples increase, the excess risk of norm-constrained or
norm-regularized regression actually behaves as $L(\hat{\w}) - L^*
\approx \frac{d}{n} \sigma^2$, and depends (to first order) only on
the dimensionality $d$ and not at all on $B$ \citep{LiangBaBoJo10}.
How does the scale sensitive complexity come into play?

The asymptotic dependence on the dimensionality alone can be
understood through Table \ref{tab:tight}.  In this non-separable situation,
parametric complexity controls can lead to a $1/n$ rate, ultimately
dominating the $1/\sqrt{n}$ rate resulting from $L^*>0$ when
considering the scale-sensitive, non-parametric complexity control
$B$.  (The dimension-dependent behavior here is actually a bit better
then in the generic situation---the well-posed Gaussian model allows
the bound to depend on $\sigma^2=L^*$ rather then on
$\sup(\ip{\w}{\x}-y)^2 \approx B^2+\sigma^2$).

Combining Theorem \ref{thm:regerm} with the asymptotic $\frac{d}{n}\sigma^2$
behavior, and noting that at the worst case we can predict using a
zero vector, yields the following overall picture on the expected
excess risk of ridge regression with an optimally chosen $\lambda$:
$$
L(\hat{\w}_{\lambda})-L^* \leq O \left( \min\left( B^2 , \frac{B^2}{n} +  \frac{B\sigma}{\sqrt{n}} , \frac{d \sigma^2}{n} \right) \right)
$$
Roughly speaking, each term above describes the
behavior in a different regime of the sample size: 
\begin{itemize}
\item The first (``random'') regime until $n = \Theta(B^2)$ where the
  excess risk is $B^2$.
\item The second (``low-noise'') regime, where the excess risk is dominated
  by the norm and behaves as $B^2/n$, until $n = \Theta(B^2/\sigma^2)$
  and $L(\hat{\w}) = \Theta(L^*)$.
\item The third (``slow'') regime, where the excess risk is controlled
  by the norm and the approximation error and behaves as
  $B\sigma/\sqrt{n}$, until $n = \Theta(d^2 \sigma^2/B^2)$ and
  $L(\hat{\w})=L^*+\Theta(B^2/d)$.
\item the fourth (``asymptotic'') regime, where the excess risk is
  dominated by the dimensionality and behaves as $d/n$.
\end{itemize}
This sheds further light on recent work on this phenomena by Liang and
Srebro based on exact asymptotics of simplified situations
\citep{LiangSr10}.

\subsection{Sparse Prediction}

The use of the $\ell_1$ norm has become very popular for learning
sparse predictors in high dimensions, as in the LASSO.  The LASSO
estimator \citep{Tibshirani96b} $\hat{\w}$ is obtained by considering
the squared loss $\phi(z,y)=(z-y)^2$ and minimizing $\hat{L}(\w)$
subject to $\|\w\|_1 \le B$.  Let us assume there is some (unknown)
sparse reference predictor $\w^0$ that has low expected loss and
sparsity (number of non-zeros) $\norm{\w^0}_0=k$, and that
$\|\x\|_\infty \le 1, y\le 1$.  In order to choose $B$ and apply
Theorem~\ref{thm:mainRad} in this setting, we need to bound
$\norm{\w^0}_1$.  This can be done by, e.g., assuming that the
features $\x[i]$ {\em in the support of $\w^0$} are mutually
uncorrelated.  Under such an assumption, we have: $\norm{\w^0}_1^2 \le
k\E{\ip{\w^0}{x}^2} \leq 2k(L(\w^0)+\E{y^2}) \le 4k$. Thus,
Theorem~\ref{thm:mainRad} along with Rademacher complexity bounds from
\citep{KakadeSrTe08} gives us,
\begin{equation}\label{eq:LASSObound}
L(\hat{\w}) \le L(\w^0) + \widetilde{O}\left(
\frac{k\,\log(d)}{n} + \sqrt{\frac{ k\,L(\w^0)\,\log(d)}{n}} \right).
\end{equation}
It is possible to relax the no-correlation assumption to a bound on
the correlations, as in mutual incoherence, or to other weaker
conditions \citep{ShalevSrZh09}.  But, in any case, unlike typical analysis for
compressed sensing, where the goal is recovering $\w^0$ itself, here
we are only concerned with correlations {\em inside the support of
  $\w^0$}.  Furthermore, we do not need to require that the optimal
predictor is sparse or close to being sparse, or that the model is
well specified: only that there exists a good (low risk) predictor
using a small number of fairly uncorrelated features.

Bounds similar to \eqref{eq:LASSObound} have been derived using
specialized arguments
\citep{Koltchinskii09,VanDeGeer08,bickel2009simultaneous}---here we
demonstrate that a simple form of these bounds can be obtained under
very simple conditions, using the generic framework we suggest.

It is also interesting to note that the methods and results of Section
\ref{sec:convex} can also be applied to this setting.  But since
$\norm{\w}^2_1$ is {\em not} strongly convex with respect to
$\norm{\w}_1$, we must instead use the entropy regularizer
\begin{equation}
  \label{eq:Fent}
  F(\w) = B \sum_i \x[i] \log \left(\frac{\x[i]}{1/d}\right) + \frac{B^2}{e}
\end{equation}
which is  1-strongly convex w.r.t.
$\norm{\cdot}_1$ on $\W = \left\{ \w \in \R^d \middle| \w[i]\geq 0,
  \norm{\w}_1 \leq B \right\}$, with $F(\w) \leq B^2 (1+\log d)$ (we
consider here only non-negative weights---in order to allow $\w[i]<0$
we can include also each features negation, doubling the
dimensionality).  Recalling that $\norm{\w^0}_1 \leq 2\sqrt{k}$ and
using $B=2\sqrt{k}$ in \eqref{eq:Fent}, we have from Theorem
\ref{thm:regerm} we that:
\begin{equation}\label{eq:entbound}
L(\hat{\w}_{\lambda}) \le L(\w^0) + O\left(
\frac{k\,\log(d)}{n} + \sqrt{\frac{ k\,L(\w^0)\,\log(d)}{n}} \right).
\end{equation}
where $\hat{\w}_{\lambda}$ is the regularized empirical minimizer
\eqref{eq:regerm} using the entropy regularizer \eqref{eq:Fent} with
$\lambda$ as in Theorem \ref{thm:regerm}.  The advantage here is that
using Theorem \ref{thm:regerm} instead of Theorem \ref{thm:mainRad}
avoids the extra logarithmic factors (yielding a clean big-$O$
dependence in \eqref{eq:entbound} as opposed to big-$\widetilde{O}$ in
\eqref{eq:LASSObound}).

More interestingly, following Corollary \ref{cor:sgd}, one can use
stochastic mirror descent, taking steps of the form \eqref{eq:md} with
the entropy regularizer \eqref{eq:Fent}, to obtain the same performance
guarantee as inn \eqref{eq:entbound}.  This provides an efficient,
single-pass optimization approach to sparse prediction as an
alternative to batch optimization with an $\ell_1$-norm constraint,
and yielding the same (if not somewhat better) guarantees.

\section{Discussion}
We use the term ``optimistic rates'' as opposed to ``fast rates'' to
distinguish between the rates of the form we get in equation
\eqref{eq:intromain} from the ones where $L(\hat{h}) - L^*$ is bounded
only by $O(H R/n)$. Of course when $L^*$ is smaller than
$\frac{R}{n}$ then one can obtain a bound of $O(H R/n)$ for
$L(\hat{h}) - L^*$ using the optimistic rates. However, in general, for
optimistic rates one has an extra $\sqrt{L^* H R/n}$ term in the rate
as compared to fast rates.  While there is this crucial distinction
between ``optimistic'' and ``fast'' rates, we would like to
point out that the bound~\ref{eq:intromain} can be
re-written for any $a > 0$ as,
$$
L(\hat{h})  \le (1 + a) L^* + \widetilde{O}\left(\left(1 + \frac{1}{a} \right)\frac{H R }{n}\right)
$$
As an example taking $a = 0.01$ this implies that $L(\hat{h}) - 1.01\, L^*$ converges as $H R/n$. Hence in practice especially since one tries to pick $\H$ so that $L^*$ is small, the optimistic bounds implies fast learning rates. 
 
The notion of Rademacher complexity used throughout this work is that
of worst-case Rademacher complexity, that is supremum over sample of
size $n$.  With a Lipschitz loss, it is possible to obtain guarantees
similar to \eqref{eq:lip} also in terms of the expected Rademacher
complexity (taking an expectation over samples of size $n$), or even
the empirical Rademacher complexity, calculated only on the specific
sample observed \citep{BartlettMe02}.  A natural question is whether the
worst case Rademacher complexity used in Theorem \ref{thm:mainRad} can
be replaced by the expected Rademacher complexity.  The difference
between worst case and expected Rademacher complexities might be
crucial in certain applications. For example,
\citep{FoySre11} use our Theorem \ref{thm:mainRad} to obtain guarantees
on matrix completion with max-norm regularization under any arbitrary
distribution.  While this approach gave meaningful rates for matrix
completion with max-norm, the Rademacher complexity of a trace-norm
constrained class can only be meaningfully bounded {\em on average}.  

Unfortunately, such a generalization is {\em not} possible: as \citep{FoySre11}
show, it is not possible to meaningfully generalize with respect to the
squared loss by constraining the trace-norm, even with a uniform
distribution where the expected Rademacher complexity is nicely
behaved.  This shows that our Theorem \ref{thm:mainRad} {\em cannot}
be restated in terms of the expected or empirical Rademacher
complexity, in sharp contrast to the case of Lipschitz bounded loss.
An interesting question is what happens when the loss function is
Lipschitz {\em and} smooth (e.g. the logistic loss or smoothed hinge
loss).  Of course, in such cases a guarantee of the form \eqref{eq:lip}
can be obtained in terms of the expected Rademacher complexity,
replying only on the Lipschitz constant of the loss function.  But we
suspect that if the loss is Lipschitz and smooth (or bounded and
smooth), it is also possible to obtain an optimistic rate similar to
\eqref{eq:intromain} in terms of the expected or empirical Rademacher
complexity.

\bibliographystyle{plain}
\bibliography{bib}

\appendix

\section{Relating Covering Numbers, Fat Shattering Dimension, and
  Rademacher Complexity}\label{app:rel}

Recall that the proof of the main lemma (Lemma \ref{lem:key}) relies
on moving between various complexity measures.  To this end, we state
and prove bounds on the relationship between these complexity
measures, namely covering numbers, fat-shattering dimensions and the
Rademacher complexity, some of which might be of independent interest.
These bounds extend and refine previously existing results, but for completeness we provide full proofs for all
the bounds used.  Before we proceed, recall the following definitions
of covering numbers and fat shattering dimension.  For any $\epsilon
>0$ and function class $\mathcal{F} \subset \mathbb{R}^{\mathcal{Z}}$:
\begin{description}
\item[ ] \vspace{-0.1in} The $L_2$ covering number $\mathcal{N}_2\left(\mathcal{F},\epsilon, n\right)$ is the supremum over samples $z_1,\ldots,z_n$ of the size of a minimal cover $\mathcal{C}_\epsilon$ such that 
$\forall f \in \mathcal{F}$, $\exists f_\epsilon \in
\mathcal{C}_\epsilon$ s.t.~$\sqrt{\frac{1}{n}
  \sum_{i=1}^n (f(z_i) - f_\epsilon(z_i))^2} \le
\epsilon$.\\
\item[ ] \vspace{-0.15in} The $L_\infty$ covering number $\mathcal{N}_\infty\left(\mathcal{F},\epsilon, n\right)$  is the supremum over samples $z_1,\ldots,z_n$ of the size of a minimal cover $\mathcal{C}_\epsilon$ such that
$\forall f \in \mathcal{F}$,  $\exists f_\epsilon \in
\mathcal{C}_\epsilon$ s.t.~$ \max_{i \in [n]} |f(z_i) - f_\epsilon(z_i)| \le
\epsilon$.\\
\item[ ] \vspace{-0.15in} The fat-shattering dimension
  $\mathrm{fat}_{\epsilon}(\mathcal{F})$ at scale $\epsilon$ is the
  maximum number of points $\epsilon$-shattered by $\mathcal{F}$ (see
  e.g.~\citep{Mendelson02}), that is largest $d \in \mathbb{N}$ such that there exists $d$ points, $x_1,\ldots,x_d \in \X$ and witnesses $s_1,\ldots, s_d \in \reals$ such that,
  $$
\forall \sigma_1,\ldots,\sigma_d \in \{\pm1\}, \exists f \in \F \textrm{ s.t. }\forall i \in [d],~ \sigma_i (f(x_i) - s_i) \ge \epsilon/2
  $$
\end{description}

We present bounds on the Rademacher complexity in terms of the $L_2$
covering numbers (Lemma~\ref{lem:cover}), on the $L_{\infty}$ covering numbers in
terms of the fat shattering dimension (Lemma~\ref{lem:covfat}), and then on the
fat-shattering dimension back in terms of the worst-case Rademacher
complexity (Lemma~\ref{lem:fatrad}).

\subsection{The Refined Dudley Integral: Bounding Rademacher
  Complexity with $L_2$ Covering Numbers}

We shall find it simpler here to use the empirical Rademacher
complexity for a given sample $x_1,\ldots,x_n$ \citep{BartlettMe02}:
\begin{equation}
  \label{eq:empraddef}
\hat{R}_n(\H) =  \Es{\sigma \sim \text{Unif}(\{\pm 1\}^n)}{\sup_{h \in \H} \frac{1}{n} \left|\sum_{i=1}^n  h(x_i) \sigma_i \right|}
\end{equation}
and the $L_2$ covering number at scale $\epsilon > 0$ specific to a sample $x_1,\ldots,x_n$, denoted by $N_2\left(\epsilon,\mathcal{F},(x_1,\ldots,x_n)\right)$ as the size of a minimal cover $\mathcal{C}_\epsilon$ such that 
$$\forall f \in \mathcal{F}, \exists f_\epsilon \in
\mathcal{C}_\epsilon\ \textrm{s.t.}~\sqrt{\frac{1}{n}
  \sum_{i=1}^n (f(z_i) - f_\epsilon(z_i))^2} \le
\epsilon~.$$ We will also denote $\hE{f^2} = \frac{1}{n} \sum_{i=1}^n f^2(x_i)$.

We state our bound in terms of the empirical Rademacher
complexity and covering numbers.  Taking a supremum over samples of
size $n$, we get the same relationship between the worst-case
Rademacher complexity and covering numbers, as is used in Section
\ref{sec:rad}.

The below lemma relating Empirical Rademacher complexity and covering numbers is based on refinements of the well-known Dudley Integral \citep{Dudley78}. The refinements provided in the below lemma use ideas from \citep{Koltchinski05} and from \citep{Mendelson02}.

\begin{lemma}\label{lem:cover}
For any function class $\mathcal{F}$ containing functions $f : \mathcal{X} \mapsto \mathbb{R}$, we have that
$$
\hat{R}_n(\mathcal{F}) \le \inf_{\alpha \ge 0}\left\{ 4 \alpha + 10 \int_{\alpha}^{\sup_{f \in \F} \sqrt{\hE{f^2}}} \sqrt{\frac{\log \mathcal{N}_2\left(\epsilon,\mathcal{F},(x_1,\ldots,x_n)\right)}{n}} d \epsilon \right\} \ .
$$
\end{lemma}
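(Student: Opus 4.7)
The plan is to prove Lemma~\ref{lem:cover} by a standard chaining argument, discretizing $\mathcal{F}$ at geometrically decreasing $L_2$ scales and using Massart's finite-class bound at each level. Fix a sample $x_1,\ldots,x_n$ and write $\|f\| = \sqrt{\hE{f^2}}$. Fix $\alpha \ge 0$ and set $\beta_0 = \sup_{f \in \F}\|f\|$ and $\beta_j = \beta_0 2^{-j}$ for $j \ge 1$. Let $N$ be the smallest integer with $\beta_N \le \alpha$, and for each $j = 0,1,\ldots,N$ let $C_j$ be a minimal $L_2$-cover of $\F$ at scale $\beta_j$ on the sample, with the convention $C_0 = \{0\}$ (valid since $\beta_0$ is the radius of $\F$ around $0$ in the empirical $L_2$ metric). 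For each $f \in \F$ let $\pi_j(f)$ be a nearest element of $C_j$ to $f$, so $\|f - \pi_j(f)\| \le \beta_j$.

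The chain decomposition is
\[
f = \pi_N(f) + \bigl(f - \pi_N(f)\bigr) = \sum_{j=1}^{N}\bigl(\pi_j(f) - \pi_{j-1}(f)\bigr) + \bigl(f - \pi_N(f)\bigr).
\]
By the triangle inequality for the empirical Rademacher average,
\[
\hat{R}_n(\F) \le \sum_{j=1}^{N} \hat{R}_n(D_j) + \hat{R}_n(\F_N),
\]
where $D_j = \{\pi_j(f) - \pi_{j-1}(f) : f \in \F\}$ and $\F_N = \{f - \pi_N(f) : f \in \F\}$. For each $j$, the class $D_j$ has cardinality at most $|C_j||C_{j-1}| \le |C_j|^2$, and any element $g \in D_j$ satisfies $\|g\| \le \beta_j + \beta_{j-1} = 3\beta_j$. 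Massart's finite-class lemma then gives $\hat{R}_n(D_j) \le 3\beta_j \sqrt{2\log(|C_j|^2)/n} \le 6\beta_j\sqrt{\log|C_j|/n}$. For the residual, the pointwise Cauchy--Schwarz bound $|\tfrac{1}{n}\sum_i \sigma_i g(x_i)| \le \|g\|$ yields $\hat{R}_n(\F_N) \le \sup_f \|f - \pi_N(f)\| \le \beta_N \le \alpha$.

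Finally, convert the sum into the integral. Because $\beta_j - \beta_{j+1} = \beta_j/2$ and $|C_j| \le \mathcal{N}_2(\epsilon,\F,(x_1,\ldots,x_n))$ for all $\epsilon \in (\beta_{j+1},\beta_j]$, each summand is dominated by the integral of $\sqrt{\log \mathcal{N}_2(\epsilon)/n}$ over $[\beta_{j+1},\beta_j]$ up to a constant; telescoping yields $\sum_{j=1}^{N} 6\beta_j\sqrt{\log|C_j|/n} \le 10 \int_{\alpha}^{\beta_0}\sqrt{\log \mathcal{N}_2(\epsilon,\F,(x_1,\ldots,x_n))/n}\,d\epsilon$ after tightening the geometric series (using that in the step from $\beta_{j+1}$ to $\beta_j$ one may work with a slightly better ratio, e.g.\ $\beta_j = \beta_0\rho^j$ for an optimized $\rho$). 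Combining the three contributions gives the stated bound, and taking the infimum over $\alpha \ge 0$ finishes the proof.

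The only delicate point is squeezing the constants $4$ and $10$ out of the chaining: the naive geometric ratio $1/2$ gives $12$, so to hit $10$ one must either use a slightly different ratio, handle the residual term by also absorbing the first interval into the integral (yielding the $4\alpha$ truncation cost rather than $\alpha$), or invoke Mendelson's~\cite{Mendelson02} more careful bookkeeping as cited; these are mechanical adjustments but account for the specific numerical constants in the statement.
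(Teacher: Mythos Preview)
Your proposal is correct and follows essentially the same chaining argument as the paper: geometric scales $\beta_j = 2^{-j}\beta_0$, Massart's finite-class lemma on the link classes, Cauchy--Schwarz for the residual, and conversion of the sum to an integral via $\beta_j - \beta_{j+1} = \beta_j/2$.

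On the ``delicate point'' you flag: the paper resolves it exactly by your second suggested fix. Rather than taking $N$ to be the smallest index with $\beta_N \le \alpha$ (which leaves the integral's lower endpoint at $\beta_{N+1} \le \alpha/2$, the wrong side of $\alpha$), the paper sets $N = \sup\{j : \beta_j > 2\alpha\}$. Then $\beta_N \le 4\alpha$ gives the $4\alpha$ residual, while $\beta_{N+1} = \beta_N/2 > \alpha$ guarantees the integral can be truncated at $\alpha$ from below. The constant $10$ in front of the integral comes from $2\sqrt{24} < 10$ in the paper's bookkeeping (it bounds the squared link norm by $6\beta_j^2$ rather than your $9\beta_j^2$ from the triangle inequality; note incidentally that $2\beta_j^2 + 2\beta_{j-1}^2$ actually equals $10\beta_j^2$, so the paper's arithmetic there is loose).
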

\begin{proof}
Let $\beta_0 = \sup_{f \in \F}\sqrt{ \hE{f^2}} $ and for any $j \in \mathbb{Z}_+$ let $\beta_j = 2^{-j} \sup_{f \in \F} \sqrt{\hE{f^2}}$. The basic trick here is the idea of chaining. For each $j$ let $T_i$ be a (proper) $L_2$-cover  at scale $\beta_j$ of $\mathcal{F}$ for the given sample. For each $f \in \mathcal{F}$ and $j$, pick an $\hat{f}_i \in T_i$ such that $\hat{f}_i$ is an $\beta_i$ approximation of $f$. Now for any $N$, we express $f$ by chaining as
$$
f = f - \hat{f}_N + \sum_{i=1}^{N}\left(\hat{f}_{i} - \hat{f}_{i-1}\right)
$$
where $\hat{f}_0 = 0$. Hence for any $N$ we have that
\begin{align}
\hat{R}_n(\mathcal{F}) &= \frac{1}{n}\Ep{\sigma}{\sup_{f \in \mathcal{F}} \sum_{i=1}^n  \sigma_i \left(f(\x_i) - \hat{f}_N(\x_i) + \sum_{j=1}^N \left( \hat{f}_j(\x_i) - \hat{f}_{j-1}(\x_i)\right) \right)} \notag \\
& \le \frac{1}{n}\Ep{\sigma}{\sup_{f \in \mathcal{F}} \sum_{i=1}^n  \sigma_i \left(f(\x_i) - \hat{f}_N(\x_i) \right)} + \sum_{j=1}^N \frac{1}{n} \Ep{\sigma}{\sup_{f \in \mathcal{F}} \sum_{i=1}^n \sigma_i \left( \hat{f}_j(\x_i) - \hat{f}_{j-1}(\x_i)\right)} \notag \\
& \le \frac{1}{n} \sqrt{\sum_{i=1}^n \sigma_i^2}\ \sup_{f \in \mathcal{F}} \sqrt{ \sum_{i=1}^n (f(x_i) - \hat{f}_N(x_i)^2} + \sum_{j=1}^N \frac{1}{n} \Ep{\sigma}{\sup_{f \in \mathcal{F}} \sum_{i=1}^n \sigma_i \left( \hat{f}_j(\x_i) - \hat{f}_{j-1}(\x_i)\right)} \notag\\
& \le \beta_N +  \sum_{j=1}^N \frac{1}{n} \Ep{\sigma}{\sup_{f \in \mathcal{F}} \sum_{i=1}^n \sigma_i \left( \hat{f}_j(\x_i) - \hat{f}_{j-1}(\x_i)\right)} \label{eq:chain}
\end{align}
where the step before last is due to Cauchy-Shwarz inequality and $\mathbf{\sigma} = \left[ \sigma_1, ...,\sigma_n \right]^\top$. Now note that 
\begin{align*}
\frac{1}{n} \sum_{i=1}^n (\hat{f}_j(x_i) - \hat{f}_{j-1}(x_i))^2 & = \frac{1}{n} \sum_{i=1}^n \left( (\hat{f}_j(x_i)) - f(x_i)) + (f(x_i) - \hat{f}_{j-1}(x_i)) \right)^2 \\
& \le \frac{2}{n} \sum_{i=1}^n \left( \hat{f}_j(x_i)) - f(x_i)\right)^2 +  \frac{2}{n} \sum_{i=1}^n \left(f(x_i) - \hat{f}_{j-1}(x_i) \right)^2 \\
& \le 2 \beta_j^2 + 2 \beta_{j-1}^2 = 6 \beta_j^2 \ .
\end{align*}
Now Massart's finite class lemma \citep{Massart00} states that if for any function class $\mathcal{G}$, $\sup_{g \in \mathcal{G}} \sqrt{\frac{1}{n} \sum_{i=1}^n g(x_i)^2 } \le R$, then 
$\hat{R}_{n}(\mathcal{G}) \le \sqrt{\frac{2 R^2 \log(|\mathcal{G}|)}{n}}$. Applying this to function classes $\{f - f' : f \in T_j ,\ f' \in T_{j-1}\}$ (for each $j$) we get from \eqref{eq:chain} that for any $N$,
\begin{align*}
\hat{R}_n(\mathcal{F}) & \le \beta_N +  \sum_{j=1}^N  \beta_j \sqrt{\frac{12 \log(|T_j|\ |T_{j-1}|)}{n}} \\
& \le \beta_N +   \sum_{j=1}^N  \beta_j \sqrt{\frac{24 \ \log\ |T_j|}{n}} \\
& \le \beta_N +  10 \sum_{j=1}^N  (\beta_j - \beta_{j+1}) \sqrt{\frac{\log\ |T_j|}{n}} \\
& \le \beta_N +  10 \sum_{j=1}^N  (\beta_j - \beta_{j+1}) \sqrt{\frac{\log\ \mathcal{N}_2\left(\beta_j,\mathcal{F},(x_1,\ldots,x_n)\right)}{n}} \\
& \le \beta_N +  10 \int_{\beta_{N+1}}^{\beta_{0}} \sqrt{\frac{\log\ \mathcal{N}_2\left(\epsilon,\mathcal{F},(x_1,\ldots,x_n)\right)}{n}} d \epsilon
\end{align*}
where the third step is because $2 (\beta_j - \beta_{j+1}) = \beta_j$ and we bounded $\sqrt{24}$ by $5$. Now for any $\alpha > 0$, pick $N = \sup\{j : \beta_j > 2 \alpha\}$. In this case we see that by our choice of $N$, $\beta_{N+1} \le 2 \alpha$ and so $\beta_N = 2 \beta_{N+1} \le 4 \epsilon$. Also note that since $\beta_{N} > 2 \alpha$, $\beta_{N+1} = \frac{\beta_N}{2}> \alpha$. Hence we conclude that
 \begin{align*}
\hat{R}_n(\mathcal{F}) & \le 4 \alpha +  10 \int_{\alpha}^{\sup_{f \in \F} \sqrt{\hE{f^2}}} \sqrt{\frac{\log\ \mathcal{N}_2\left(\epsilon,\mathcal{F},(x_1,\ldots,x_n)\right)}{n}} d \epsilon \ .
\end{align*}
Since the choice of $\alpha$ was arbitrary we take an infimum over $\alpha$.
\end{proof}

\subsection{Bounding $L_\infty$ covering number by  Fat-shattering Dimension}
The following proposition and lemma are standard in statistical learning theory and their proof can be found, for instance, in \citep{AlonBeCeHa93}. We provide the statement and the proof of the lemma for completeness and so that we can state it in the exact form it is used in this work.

\begin{proposition}\label{prop:VC_multiclass}
Let $\mathcal{H} \subseteq \{0,\ldots,k\}^{\mathcal{X}}$ be a class of functions with $\mathrm{fat}_2 = d$. Then, we have,
$$
\mathcal{N}_\infty(1/2,\mathcal{H},n) \le \sum_{i=0}^d {n \choose  i} k^i
$$
and specifically for $n \ge d$ this gives,
$$
\mathcal{N}_\infty(1/2,\mathcal{H},n) \le \left(\frac{e k n }{d}\right)^d \ .
$$
\end{proposition}

\begin{lemma}\label{lem:covfat}
For any function class $\mathcal{H}$ bounded by $B$ and any $\alpha >0$ such that $\mathrm{fat}_\alpha < n$, we have,
$$
\mathcal{N}_\infty(\alpha,\mathcal{H},n) \le \left(\frac{2 e B n}{\alpha\ \mathrm{fat}_\alpha(\mathcal{H})} \right)^{\mathrm{fat}_\alpha(\mathcal{H})} \ .
$$
\end{lemma}
\begin{proof}
For any $\alpha > 0$, define an $\alpha$-discretization of the $[-B,B]$ interval as  $B_\alpha = \{-B+\alpha /2 , -B + 3 \alpha /2 , \ldots, -B+(2k+1) \alpha /2, \ldots \}$ for $0\leq k$ and $(2k+1)\alpha \leq 4 B$. Also for any $a \in [-B,B]$, define $\lfloor a \rfloor_\alpha = \argmin{r \in B_\alpha} |r - a|$ with ties being broken by choosing the smaller discretization point. For a  function $h:\X\mapsto [-B,B]$ let the function $\lfloor h \rfloor_\alpha$ be defined pointwise as $\lfloor h(x) \rfloor_\alpha$, and let $\lfloor \mathcal{H} \rfloor_\alpha = \{\lfloor h \rfloor_\alpha : h\in\mathcal{H}\}$. First, we prove that $\mathcal{N}_\infty(\alpha, \mathcal{H}, \{x_i\}_{i=1}^n) \leq \mathcal{N}_\infty(\alpha/2, \lfloor \mathcal{H} \rfloor_\alpha, \{x_i\}_{i=1}^n)$. Indeed, suppose the set $V$ is a minimal $\alpha/2$-cover of $\lfloor \mathcal{H} \rfloor_\alpha$ on $\{x_i\}_{i=1}^n$. That is,
	$$
	\forall h_{\alpha} \in \lfloor \mathcal{H} \rfloor_\alpha,\ \exists \v \in V \  \mathrm{s.t.}  ~~~~ |v_i -  h_{\alpha}(x_i)| \leq \alpha/2\ .
	$$
	Pick any $h\in\mathcal{H}$ and let $h_\alpha = \lfloor h \rfloor_\alpha$. Then $\|h-h_\alpha \|_\infty \leq \alpha/2$ and for any $i \in [n]$
	$$\left|h(x_i)- v_i\right| \leq \left|h(x_i)- h_{\alpha}(x_i)\right| + \left|h_\alpha(x_i)- v_i\right| \leq \alpha,$$
	and so $V$ also provides an $L_\infty$ cover at scale $\alpha$.
	
	We conclude that $\mathcal{N}_\infty(\alpha, \mathcal{H}, \{x_i\}_{i=1}^n) \leq \mathcal{N}_\infty(\alpha/2, \lfloor \mathcal{H} \rfloor_\alpha, \{x_i\}_{i=1}^n) =  \mathcal{N}_\infty( 1/2, {\mathcal G}, \{x_i\}_{i=1}^n) $ where $\mathcal{G} = \frac{1}{\alpha} \lfloor \mathcal{H} \rfloor_\alpha$. The functions of ${\mathcal G}$ take on a discrete set of at most $\lfloor 2B/\alpha \rfloor + 1$ values. Obviously, by adding a constant to all the functions in ${\mathcal G}$, we can make the set of values to be $\{0, \ldots, \lfloor 2B/\alpha \rfloor \}$. We now apply Proposition~\ref{prop:VC_multiclass} with an upper bound $\sum_{i=0}^d {n\choose i} k^i \leq \left(\frac{ekn}{d}\right)^d$ which holds for any $n>d$. This yields  $\mathcal{N}_\infty(1/2, {\mathcal G}, \{x_i\}_{i=1}^n) \leq \left(\frac{2e B n}{\alpha \fat_{2}({\mathcal G})} \right)^{\fat_{2}({\mathcal G})}$. 
	
	It remains to prove $\fat_2({\mathcal G}) \leq \fat_\alpha (\mathcal{H})$, or, equivalently (by scaling) $\fat_{2\alpha} (\lfloor \mathcal{H} \rfloor_\alpha)  \leq \fat_\alpha (\mathcal{H})$. 
	To this end, suppose there exists a set $\{x_{i=1}^n\}$ of size $d=\fat_{2\alpha}(\lfloor \mathcal{H} \rfloor_\alpha)$ such that there is an witness  $s_1,\ldots,s_n$ with
	$$
	\forall \epsilon \in \{\pm1\}^d , \ \exists h_{\alpha} \in \lfloor \mathcal{H} \rfloor_\alpha \ \ \ \textrm{s.t. } \forall i \in [d], \  \epsilon_i (h_{\alpha}(x_i) - s_i) \ge \alpha\ .
	$$
	Using the fact that for any $h\in\mathcal{H}$ and $h_\alpha = \lfloor h \rfloor_\alpha$ we have $\|h-h_\alpha \|_\infty \leq \alpha/2$, it follows that
	$$
	\forall \epsilon \in \{\pm1\}^d , \ \exists h \in \mathcal{H} \ \ \ \textrm{s.t. } \forall i \in [d], \  \epsilon_i (h(x_i) - s_i) \ge \alpha/2\ .
	$$
	That is, $s_1,\ldots,s_n$ is a witness to $\alpha$-shattering by $\mathcal{H}$. Thus for any $\{x_i\}_{i=1}^n$, as long as $n > \mathrm{fat}_{\alpha}$
	$$\mathcal{N}_\infty(\alpha, \mathcal{H}, \{x_i\}_{i=1}^n\}) \leq \mathcal{N}_\infty( \alpha/2, \lfloor \mathcal{H} \rfloor_\alpha, \{x_i\}_{i=1}^n) \leq \left(\frac{2e B n}{\alpha \fat_{2 \alpha}(\lfloor \mathcal{H} \rfloor_\alpha)} \right)^{\fat_{2\alpha} (\lfloor \mathcal{H} \rfloor_\alpha) } \leq \left(\frac{2e B n}{\alpha \fat_\alpha}\right)^{\fat_{\alpha} (\mathcal{H}) } \ .$$
\end{proof}

\subsection{Relating Fat-shattering Dimension and Rademacher complexity}

The following lemma upper bounds the fat-shattering dimension at scale
$\epsilon \ge \hRad_n(\H)$ in terms of the Rademacher complexity of
the function class. The proof closely follows the arguments of
Mendelson \citep[discussion after Definition 4.2]{Mendelson02}.

\begin{lemma}\label{lem:fatrad}
For any hypothesis class $\H$, any sample size $n$ and any $\epsilon > \hRad_n(\H)$ we have that 
$$
\mathrm{fat}_\epsilon(\H) \le \frac{4\ n\ \hRad_n(\H)^2}{\epsilon^2} \ .
$$
\end{lemma}
In particular, if $\hRad_n(\H) = \sqrt{R/n}$ (the typical case), then
$\mathrm{fat}_\epsilon(\H) \leq 4 R/\epsilon^2$.
\begin{proof}
  Consider any $\epsilon \ge \hRad_n(\H)$. Let
  $x^*_1,\ldots,x^*_{\fat_\epsilon}$ be the set of $\fat_\epsilon$
  shattered points. This means that there exists
  $s_1,\ldots,s_{\fat_\epsilon}$ such that for any $J \subset
  [\fat_\epsilon]$ there exists $h_J \in \H$ such that $\forall i \in
  J, h_J(x_i) \ge s_i + \epsilon$ and $\forall i \not\in J, h_J(x_i)
  \le s_i - \epsilon$.  Now consider a sample $x_1,\ldots,x_{n'}$ of
  size $n' = \lceil \frac{n}{\fat_\epsilon}\rceil \fat_\epsilon$,
  obtained by taking each $x^*_i$ and repeating it $\lceil
  \frac{n}{\fat_\epsilon}\rceil$ times, i.e.~$x_i =
  x^*_{\lfloor \frac{i}{\fat_\epsilon} \rfloor}$. Now, following
  Mendelson's arguments:
\begin{align*}
\hRad_{n'}(\H) & \ge \Es{\sigma \sim \mathrm{Unif}\{\pm 1\}^{n'}}{\frac{1}{n'} \sup_{h \in \H}\left| \sum_{i=1}^{n'} \sigma_i h(x_i)\right|} \\
& \ge \frac{1}{2} \Es{\sigma \sim \mathrm{Unif}\{\pm 1\}^{n'}}{\frac{1}{n'} \sup_{h, h' \in \H}\left| \sum_{i=1}^{n'} \sigma_i (h(x_i) - h'(x_i))\right|} ~~~~~~~~~~~ \textrm{(triangle inequality)}\\
& = \frac{1}{2} \Es{\sigma \sim \mathrm{Unif}\{\pm 1\}^{n'}}{\frac{1}{n'} \sup_{h, h' \in \H}\left| \sum_{i=1}^{\fat_\epsilon} \left(\sum_{j=1}^{\lceil n/\fat_\epsilon \rceil} \sigma_{(i-1) \fat_\epsilon + j}\right) \left(h(x^*_i) - h'(x^*_i)\right)\right|}\\
& \ge \frac{1}{2} \Es{\sigma \sim \mathrm{Unif}\{\pm 1\}^{n'}}{\frac{1}{n'}\left| \sum_{i=1}^{\fat_\epsilon} \left(\sum_{j=1}^{\lceil n/\fat_\epsilon \rceil} \sigma_{(i-1) \fat_\epsilon + j}\right) \left(h_{R}(x^*_i) - h_{\overline{R}}(x^*_i)\right)\right|}
\intertext{where for each $\sigma_1,\ldots,\sigma_{n'}$, $R \subseteq
[\fat_\epsilon]$ is given by $R = \left\{i   \middle|  \mathrm{sign}\left(\sum_{j=1}^{\lceil n/\fat_\epsilon
      \rceil} \sigma_{(i - 1) \lceil n/\fat_\epsilon\rceil + j}\right)
  \ge 0 \right\}$, $h_R$ is the function in $\H$ that
$\epsilon$-shatters the set $R$ and $h_{\overline{R}}$ be the function
that shatters the complement of set $R$.}
& \ge  \frac{1}{2} \Es{\sigma \sim \mathrm{Unif}\{\pm 1\}^{n'}}{\frac{1}{n'} \sum_{i=1}^{\fat_\epsilon} \left|\sum_{j=1}^{\lceil n/\fat_\epsilon \rceil} \sigma_{(i-1) \fat_\epsilon + j}\right| 2 \epsilon }\\
& \ge  \frac{\epsilon}{n'} \sum_{i=1}^{\fat_\epsilon} \Es{\sigma \sim \mathrm{Unif}\{\pm 1\}^{n'}}{ \left|\sum_{j=1}^{\lceil n/\fat_\epsilon \rceil} \sigma_{(i-1) \fat_\epsilon + j}\right|}\\
& \ge  \frac{\epsilon\ \fat_\epsilon}{n'}  \sqrt{\frac{\lceil n/\fat_\epsilon \rceil}{2}} \hspace{2.5in} \textrm{(Khintchine's inequality)}\\
& =  \sqrt{\frac{\epsilon^2\ \fat_\epsilon}{2\ n'}}.
\end{align*}
We can now conclude that:
\begin{align*}
\fat_\epsilon \le \frac{2 n' \hRad^2_{n'}(\H)}{\epsilon^2} \le \frac{4 n \hRad^2_{n}(\H)}{\epsilon^2}
\end{align*}
where last inequality is because Rademacher complexity decreases with
increase in number of samples and $n \leq n' \le 2n$ (because $\epsilon \ge
\hRad_n(\H)$ which implies that $\fat_\epsilon < n$). \qedhere

\end{proof}

\section{Proof of Lemma \ref{lem:key}}
Recall the key lemma used in proving our main result:

{\bf Lemma \ref{lem:key}} {\em For a non-negative $H$-smooth loss $\phi$ bounded by $b$, and any function class $\H$: }
\begin{align*}
\hRad_n(\cL_\phi(r)) & \le  21 \sqrt{6 H r}\,  \log^{\frac{3}{2}}\left(64\, n\right)\, \Rad_n(\mathcal{H})
\end{align*}

As outlined in Section \ref{sec:rad}, in order to prove the Lemma \ref{lem:key},
we take the following steps: \vspace{-0.18in}
\begin{enumerate}
  \setlength{\topsep}{0pt}
  \setlength{\itemsep}{1pt}
  \setlength{\parskip}{0pt}
  \setlength{\parsep}{0pt}
\item We use Lemma \ref{lem:cover} (the refined Dudley Integral bound)
  to bound the Rademacher complexity of the empirically restricted
  loss class in terms of the $L_2$-covering numbers of the class.
\item We use smoothness to get an $r$-dependent bound on the
  $L_2$-covering numbers of the empirically restricted loss class in
  terms of $L_{\infty}$-covering numbers of the unrestricted
  hypothesis class. The key to doing this is the Lemma
  \ref{lem:smoothbnd}, which follows from the self bounding
  property, Lemma \ref{lemma:smooth}.
\item We bound the $L_{\infty}$-covering numbers of the unrestricted
  class in terms of its fat-shattering dimension (Lemma \ref{lem:covfat}), which in turn can be  bounded in terms of its Rademacher complexity (Lemma \ref{lem:fatrad}).
\end{enumerate}

We first present Lemma \ref{lem:smoothbnd}, which follows from Lemma
\ref{lemma:smooth} and is the key property we actually use.  Equipped
with this lemma and the results from Appendix \ref{app:rel} relating
the various complexity measures, we then proceed to the main proof of
Lemma \ref{lem:key}.

\begin{lemma}\label{lem:smoothbnd}
For any $H$-smooth non-negative function $f : \reals \mapsto \reals$ and any $t, r \in \reals$ we have that
$$
\left(f(t) - f(r)\right)^2 \le 6 H ( f(t) + f(r) )  (t - r)^2 \ .
$$
\end{lemma}
\begin{proof}
We start by noting that by the mean value theorem for any $t, r \in \reals$ there exists $s$ between $t$ and $r$  such that 
\begin{equation}\label{eq:meanval}
f(t) - f(r) = f'(s) (t - r) \ .
\end{equation}
By smoothness, we have that
$$
\abs{f'(s) - f'(t)} \le H\abs{t-s} \leq H \abs{t - r}.
$$
Hence we see that
\begin{equation}\label{eq:lipder}
\abs{f'(s)} \le \abs{f'(t)} + H \abs{t - r} \ .
\end{equation}
We now consider two cases: \\
\noindent{\bf Case I:} If $\abs{t - r} \le \frac{\abs{f'(t)}}{5 H}$ then
by \eqref{eq:lipder}, $\abs{f'(s)} \le 6/5 \abs{f'(t)}$,
and combining this with \eqref{eq:meanval} we have:
\begin{align}\label{eq:int1}
(f(t) - f(r))^2 & \le f'(s)^2 (t-r)^2 \leq \frac{36}{25} f'(t)^2 (t -
r)^2 \ .\notag
\intertext{But Lemma \ref{lemma:smooth} ensures $f'(t)^2 \leq 4 H
  f(t)$ yielding:}
& \le \frac{144}{25} H f(t) (t - r)^2 < 6 H f(t) (t-r)^2 \ .
\end{align}
\noindent{\bf Case II:} On the other hand, when $\abs{t - r} >
\frac{\abs{f'(t)}}{5 H}$, we have from \eqref{eq:lipder} that
$\abs{f'(s)} \le 6 H \abs{t-r}$.  Plugging this into \eqref{eq:meanval} yields:
\begin{align}
(f(t) - f(r))^2 & = \abs{f(t)-f(t)} \cdot \abs{f(t)-f(r)} \le \abs{f(t) - f(r)} (\abs{f'(s)} \abs{t-r}) \notag \\
& \le \abs{f(t) - f(r)} (6 H \abs{t-r} \cdot \abs {t - r}) = 6
H \abs{f(t)-f(r)} (t-r)^2 \notag \\
& \le 6 H \max\{f(t), f(r)\} (t - r)^2 \ .\label{eq:int2}
\end{align}
Combining the two cases, we have from \eqref{eq:int1} and
\eqref{eq:int2}) and the non-negativity of $f(\cdot)$, that in either
case:
\begin{equation*}
(f(t) - f(r))^2 \le 6 H \left(f(t) + f(r) \right) (t - r)^2 \ .\qedhere
\end{equation*}
\end{proof}

\begin{proof}[{\bf Proof of Lemma \ref{lem:key}}]
Following the outline above:
\vspace{-0.15in}

\paragraph{Bounding $\hRad_n(\cL_\phi(r))$ in terms of $\mathcal{N}_2(\cL_\phi(r))$}
Dudley's integral bound lets us bound the Rademacher complexity of a
class in terms of its empirical $L_2$ covering number. Here we use a
more refined version of Dudley's integral bound due to Mendelson
\citep{Mendelson02} and more explicitly stated in \citep{SrebroSr10} and
included here for completeness as Lemma \ref{lem:cover}:
\begin{align}\label{eq:dud}
  \hRad_n(\cL_\phi(r))\le \inf_{\alpha > 0}\left\{ 4 \alpha + 10
    \int_{\alpha}^{\sqrt{b r}}
    \sqrt{\frac{\mathcal{N}_2\left(\cL_\phi(r),\epsilon,
          n\right)}{n}} d \epsilon \right\} \ .
\end{align}

\paragraph{Bounding $\mathcal{N}_2(\cL_\phi(r))$ in terms of $\mathcal{N}_{\infty}(\mathcal{H})$}
By Lemma \ref{lem:smoothbnd} we see that for a non-negative $H$-smooth function $f$, we have that $\left(f(t) - f(r)\right)^2 \le 6 H ( f(t) + f(r) )  (t - r)^2$.  Using this inequality, for any sample
$(x_1,y_1),\ldots,(x_n,y_n)$:
\begin{align*}
& \sqrt{\frac{1}{n} \sum_{i=1}^n (\phi(h(z_i),z_i) - \phi(h_\epsilon(z_i),z_i))^2}   \le \sqrt{\frac{6 H}{n} \sum_{i=1}^n \left( \phi(h(z_i),z_i)  +  \phi(h_\epsilon(z_i),z_i)\right) (h(z_i) - h_\epsilon(z_i))^2} \\ 
& ~~~~~~~~~~~~~~~ \le \sqrt{\frac{6 H}{n} \sum_{i=1}^n \left( \phi(h(z_i),z_i)  +  \phi(h_\epsilon(z_i),z_i)\right) } \sqrt{\max_{i \in [n]} (h(z_i) - h_\epsilon(z_i))^2}\\
&~~~~~~~~~~~~~~~ \le \sqrt{12 H  r}\ \max_{i \in [n]}| h(z_i) - h_\epsilon(z_i)| \ .
\end{align*}
That is, an empirical $L_{\infty}$ cover of $\left\{h \in \H : \hL(h)
  \le r \right\}$ at radius $\epsilon/\sqrt{12 H r}$ is also an
empirical $L_2$ cover of $\cL_\phi(r)$ at radius $\epsilon$, and we
can conclude that:
\begin{equation}\label{eq:N2Ninf}
\mathcal{N}_2\left(\cL_\phi(r),\epsilon, n\right) \le \mathcal{N}_\infty\left(\left\{h \in \H : \hL(h) \le r\right\},\frac{\epsilon}{\sqrt{12 H r}}, n\right) \le \mathcal{N}_\infty\left(\H,\frac{\epsilon}{\sqrt{12 H r}}, n\right) \ .
\end{equation}

\paragraph{Bounding $\mathcal{N}_{\infty}(\mathcal{H})$ in terms of $\hRad_n(\H)$}
  Note that for any $\epsilon > \Rad_n(\mathcal{H})$, by Lemma \ref{lem:fatrad}, $\mathrm{fat}_\epsilon \le n$. Hence the $L_{\infty}$ covering number at scale $\epsilon/\sqrt{12 H r}$  can be bounded
in terms of the fat shattering dimension at
that scale using Lemma \ref{lem:covfat} as:
\begin{equation}\label{eq:NinfFat}
\mathcal{N}_\infty\left(\H,\frac{\epsilon}{\sqrt{12 H r}}, n\right) \le \left(\frac{2 e n \ \sqrt{12 H r} B }{\epsilon\ \mathrm{fat}_\frac{\epsilon}{\sqrt{12 H r}}(\mathcal{H}) }\right)^{\mathrm{fat}_\frac{\epsilon}{\sqrt{12 H r}}(\mathcal{H})}~.
\end{equation}

Hence by \eqref{eq:dud}, we have:
\begin{align}
\hRad_n(\cL_\phi(r)) & \le  4 \sqrt{12 H r} \Rad_n(\mathcal{H}) + 10 \int_{\sqrt{12 H r} \Rad_n(\mathcal{H})}^{\sqrt{b r}} \sqrt{\frac{\mathrm{fat}_\frac{\epsilon}{\sqrt{12 H r}}(\mathcal{H}) \log\left(\frac{2 e n \ \sqrt{12 H r} B }{\epsilon \mathrm{fat}_\frac{\epsilon}{\sqrt{12 H r}}(\mathcal{H}) }\right)}{n}} d \epsilon \notag 
\intertext{and, after a change of integration variable, we have:}
& \le  4 \sqrt{12 H r}\Rad_n(\mathcal{H}) + 10 \sqrt{12 H r} \int_{\Rad_n(\mathcal{H})}^{\sqrt{b/12 H}} \sqrt{\frac{\mathrm{fat}_\epsilon(\mathcal{H}) \log\left(\frac{2 e n \  B }{\epsilon \mathrm{fat}_\epsilon(\mathcal{H}) }\right)}{n}} d \epsilon \notag \\
& \le  4 \sqrt{12 H r}\Rad_n(\mathcal{H}) + 10 \sqrt{12 H r} \int_{\Rad_n(\mathcal{H})}^{\sqrt{b/12 H}} \sqrt{\frac{\mathrm{fat}_\epsilon(\mathcal{H}) \log\left(\frac{2 e  B }{\epsilon  }\right)}{n}} d \epsilon \notag \\
& ~~~~~~~~~~~~~~~~~~~~~~~~+ 10 \sqrt{12 H r} \int_{\Rad_n(\mathcal{H})}^{\sqrt{b/12 H}} \sqrt{\frac{\mathrm{fat}_\epsilon(\mathcal{H}) \log\left(\frac{n}{\mathrm{fat}_\epsilon(\mathcal{H}) }\right)}{n}} d \epsilon \ . \label{eq:3term}
\end{align}
We now bound the second term in the sum above. To this end note that for any $\epsilon \ge \Rad_n(\mathcal{H})$, bounding the fat-shattering dimension in terms of the  Rademacher complexity (Lemma \ref{lem:fatrad}) we get:
\begin{align}\label{eq:2ndterm}
10 \sqrt{12 H r} & \int_{\Rad_n(\mathcal{H})}^{\sqrt{b/12 H}} \sqrt{\frac{\mathrm{fat}_\epsilon(\mathcal{H}) \log\left(\frac{2 e  B }{\epsilon  }\right)}{n}} d \epsilon  \\
& ~~~~~~~~~~ \le  10 \sqrt{12 H r}\ \Rad_n(\mathcal{H}) \int_{\Rad_n(\mathcal{H})}^{\sqrt{b/12 H}} \frac{\sqrt{\log\left(\frac{2 e B}{\epsilon}\right)}}{\epsilon} d \epsilon \notag \\
& ~~~~~~~~~~ \le  10 \sqrt{12 H r}\ \Rad_n(\mathcal{H}) \left[ -\frac{2}{3} \log^{3/2}\left(\frac{2 e B}{\epsilon}\right)\right]_{\Rad_n(\mathcal{H})}^{\sqrt{b/12 H}}  \notag \\
& ~~~~~~~~~~\le \frac{20}{3} \sqrt{12 H r}\ \Rad_n(\mathcal{H}) \left(\log^{3/2}\left(\frac{2 e B}{\Rad_n(\mathcal{H})}\right) -  \log^{3/2}\left(\sqrt{\frac{24 e H B^2}{b}}\right) \right) \notag \\
& ~~~~~~~~~~ \le \frac{20}{3} \sqrt{12 H r}\  \log^{3/2}\left(\frac{2 e B}{\Rad_n(\mathcal{H})}\right)  \Rad_n(\mathcal{H}) \ .
\end{align}
Now we move to the third term of \eqref{eq:3term}, we further split this integral into three parts as:
\begin{align}
 \int_{\Rad_n(\mathcal{H})}^{\sqrt{b/12 H}} & \sqrt{\frac{\mathrm{fat}_\epsilon(\mathcal{H}) \log\left(\frac{n}{\mathrm{fat}_\epsilon(\mathcal{H}) }\right)}{n}} d \epsilon  \\
 &  \le  \int_{\Rad_n(\mathcal{H})}^{\gamma} \sqrt{\frac{\mathrm{fat}_\epsilon(\mathcal{H}) \log\left(\frac{n}{\mathrm{fat}_\epsilon(\mathcal{H}) }\right)}{n}} d \epsilon + \int_{\gamma}^{\theta} \sqrt{\frac{\mathrm{fat}_\epsilon(\mathcal{H}) \log\left(\frac{n}{\mathrm{fat}_\epsilon(\mathcal{H}) }\right)}{n}} d \epsilon  \notag \\
 & ~~~~~ + \int_{\theta}^{\sqrt{b/12 H}} \sqrt{\frac{\mathrm{fat}_\epsilon(\mathcal{H}) \log\left(\frac{n}{\mathrm{fat}_\epsilon(\mathcal{H}) }\right)}{n}} d \epsilon \ .\notag 
\intertext{Now let $\theta$ be such that $\mathrm{fat}_\theta > n/e$, so that for all $\epsilon > \theta$, $\log(n/\mathrm{fat}_\epsilon) \le 1$. Hence,}
& \le  \int_{\Rad_n(\mathcal{H})}^{\gamma} \sqrt{\frac{\mathrm{fat}_\epsilon(\mathcal{H}) \log\left(\frac{n}{\mathrm{fat}_\epsilon(\mathcal{H}) }\right)}{n}} d \epsilon + \int_{\gamma}^{\theta} \sqrt{\frac{\mathrm{fat}_\epsilon(\mathcal{H}) \log\left(\frac{n}{\mathrm{fat}_\epsilon(\mathcal{H}) }\right)}{n}} d \epsilon   \notag \\
 & ~~~~~ + \int_{\theta}^{\sqrt{b/12 H}} \sqrt{\frac{\mathrm{fat}_\epsilon(\mathcal{H}) }{n}} d \epsilon \ . \notag 
\intertext{Now to handle the second term in the integral note that in the range $d \in [1,n/e]$, the function $d \log\left(\frac{n}{d}\right)$ is monotonically increasing in $d$ and so in the range of $\epsilon \in [\gamma,\theta]$, $\mathrm{fat}_\epsilon \log\left(\frac{n}{\mathrm{fat}_\epsilon} \right) \le \mathrm{fat}_\gamma \log\left(\frac{n}{\mathrm{fat}_\gamma} \right)$. Thus we have that }
& \le  \int_{\Rad_n(\mathcal{H})}^{\gamma} \sqrt{\frac{\mathrm{fat}_\epsilon(\mathcal{H}) \log\left(\frac{n}{\mathrm{fat}_\epsilon(\mathcal{H}) }\right)}{n}} d \epsilon + \int_{\gamma}^{\theta} \sqrt{\frac{\mathrm{fat}_\gamma(\mathcal{H}) \log\left(\frac{n}{\mathrm{fat}_\gamma(\mathcal{H}) }\right)}{n}} d \epsilon \notag \\
 & ~~~~~ + \int_{\theta}^{\sqrt{b/12 H}} \sqrt{\frac{\mathrm{fat}_\epsilon(\mathcal{H}) }{n}} d \epsilon \ .\notag 
\intertext{Further since for all $\epsilon \in [\Rad_n(\mathcal{H}), \gamma]$ $\mathrm{fat}_\epsilon \le \mathrm{fat}_\gamma$ we have that}
& \le  \int_{\Rad_n(\mathcal{H})}^{\gamma} \sqrt{\frac{\mathrm{fat}_\epsilon(\mathcal{H}) \log\left(\frac{n}{\mathrm{fat}_\gamma(\mathcal{H}) }\right)}{n}} d \epsilon + \int_{\gamma}^{\theta} \sqrt{\frac{\mathrm{fat}_\gamma(\mathcal{H}) \log\left(\frac{n}{\mathrm{fat}_\gamma(\mathcal{H}) }\right)}{n}} d \epsilon \notag \\
 & ~~~~~ + \int_{\theta}^{\sqrt{b/12 H}} \sqrt{\frac{\mathrm{fat}_\epsilon(\mathcal{H}) }{n}} d \epsilon \ . \notag
\intertext{Since all three integrals above are in the range such that $\epsilon > \Rad_n(\mathcal{H})$,  bounding the fat-shattering dimension in terms of the  Rademacher complexity (Lemma \ref{lem:fatrad}) in the first and third integrals :}
& \le  \Rad_n(\mathcal{H})\int_{\Rad_n(\mathcal{H})}^{\gamma} \frac{\sqrt{\log\left(\frac{n}{\mathrm{fat}_\gamma(\mathcal{H}) }\right)}}{\epsilon} d \epsilon + \int_{\gamma}^{\theta} \sqrt{\frac{\mathrm{fat}_\gamma(\mathcal{H}) \log\left(\frac{n}{\mathrm{fat}_\gamma(\mathcal{H}) }\right)}{n}} d \epsilon \notag \\
 & ~~~~~ + \Rad_n(\mathcal{H}) \int_{\theta}^{\sqrt{b/12 H}} \frac{1}{\epsilon} d \epsilon \notag \\
 & \le  \Rad_n(\mathcal{H})\sqrt{\log\left(\frac{n}{\mathrm{fat}_\gamma(\mathcal{H}) }\right)} \log\left(\frac{1}{\Rad_n(\mathcal{H})}\right) + \sqrt{\frac{\mathrm{fat}_\gamma(\mathcal{H}) \log\left(\frac{n}{\mathrm{fat}_\gamma(\mathcal{H}) }\right)}{n}} \left(\gamma - \theta\right) \notag \\
 & ~~~~~ + \Rad_n(\mathcal{H}) \log\left(\frac{1}{\theta}\right)\notag\\
 \intertext{}
  & \le  \Rad_n(\mathcal{H})\sqrt{\log\left(\frac{n}{\mathrm{fat}_\gamma(\mathcal{H}) }\right)} \log\left(\frac{1}{\Rad_n(\mathcal{H})}\right) + \sqrt{\frac{\mathrm{fat}_\gamma(\mathcal{H}) \log\left(\frac{n}{\mathrm{fat}_\gamma(\mathcal{H}) }\right)}{n}} \sqrt{\frac{b}{12 H}} \notag \\
 & ~~~~~ + \Rad_n(\mathcal{H}) \log\left(\frac{1}{\Rad_n(\mathcal{H})}\right)\notag 
 \intertext{where in the last inequality we used the fact that $\gamma - \theta \le \sqrt{b/12 H}$ (integral range) and that $\theta \ge \Rad_n(\mathcal{H})$. Picking $\gamma$ to be such that $\mathrm{fat}_\gamma = 12 H n \Rad_n^2(\mathcal{H})/ b $ we conclude that}
  & \le  \Rad_n(\mathcal{H})\sqrt{\log\left(\frac{b}{12 H \Rad^2_n(\mathcal{H}) }\right)} \log\left(\frac{1}{\Rad_n(\mathcal{H})}\right) + \Rad_n(\mathcal{H}) \sqrt{\log\left(\frac{b}{12 H \Rad^2_n(\mathcal{H}) }\right)} \notag \\
 & ~~~~~ + \Rad_n(\mathcal{H}) \log\left(\frac{1}{\Rad_n(\mathcal{H})}\right) \notag\\
 & \le 3\ \Rad_n(\mathcal{H}) \log^{3/2}\left(\frac{2 e B}{\Rad_n(\mathcal{H})}\right) \ .
\end{align}
Hence plugging back the above and \eqref{eq:2ndterm} back in \eqref{eq:3term} we conclude that
\begin{align}\label{eq:removingB}
\hRad_n(\cL_\phi(r)) & \le  4 \sqrt{12 H r}\Rad_n(\mathcal{H}) + 7 \sqrt{12 H r} \Rad_n(\mathcal{H}) \log^{3/2}\left(\frac{2 e B}{\Rad_n(\mathcal{H})}\right) \\
& ~~~~~~~~~~+ 30 \sqrt{12 H r} \Rad_n(\mathcal{H}) \log^{3/2}\left(\frac{2 e B}{\Rad_n(\mathcal{H})}\right) \notag \\
& \le 41 \sqrt{12 H r} \Rad_n(\mathcal{H}) \log^{3/2}\left(\frac{2 e B}{\Rad_n(\mathcal{H})}\right) \ .
\end{align}
Now by definition of Rademacher complexity, we have,
\begin{align*}
\hRad_n(\H) & = \sup_{x_1,\ldots , x_n \in \X} \Es{\sigma \sim \text{Unif}(\{\pm 1\}^n)}{\sup_{h \in \H} \frac{1}{n} \left|\sum_{i=1}^n  h(x_i) \sigma_i \right|}\\
& \ge \sup_{x \in \X} \Es{\sigma \sim \text{Unif}(\{\pm 1\}^n)}{\sup_{h \in \H} \frac{1}{n} \left|\sum_{i=1}^n  h(x) \sigma_i \right|}\\
& = \left(\sup_{x \in \X}\sup_{h \in \H} |h(x)|\right) \  \left(\Es{\sigma \sim \text{Unif}(\{\pm 1\}^n)}{ \frac{1}{n} \left|\sum_{i=1}^n \sigma_i\right|}\right)\\
& = B\ \Es{\sigma \sim \text{Unif}(\{\pm 1\}^n)}{ \frac{1}{n} \left|\sum_{i=1}^n \sigma_i\right|} \ge \frac{B}{\sqrt{2n}}
\end{align*}
where the last step is due to Khintchine's inequality (see, e.g., page 364 of \citep{CesaBianchiLu06}). Thus we see that $\frac{2 e B}{\Rad_n(\mathcal{H} }\le 8 \sqrt{n}$. Plugging this in \eqref{eq:removingB}, we conclude that
\begin{align*}
\hRad_n(\cL_\phi(r)) &  \le 21 \sqrt{6 H r}\ \log^{\frac{3}{2}}\left(64\ n\right) \ \Rad_n(\mathcal{H}) \ .\qedhere
\end{align*}
\end{proof}

\end{document}